\documentclass{article}

\usepackage{microtype}
\usepackage{graphicx}
\usepackage{subcaption}
\usepackage{booktabs} 
\usepackage{amsmath}
\usepackage{multirow,multicol}

\usepackage{pgfplots}
\usepackage{pgfplotstable}
\usepgfplotslibrary{groupplots}

\usepackage{hyperref}

\usepackage[accepted]{icml2026}

\usepackage{xspace}

\usepackage[most]{tcolorbox}
\usepackage{mathtools}

\newcommand{\hn}[1]{{{\color{purple}{#1}}}}

\newcommand{\Dcal}{\mathcal{D}}

\newcommand{\Lcal}{\mathcal{L}}
\newcommand{\Bcal}{\mathcal{B}}
\newcommand{\Scal}{\mathcal{S}}
\newcommand{\Ecal}{\mathcal{E}}
\newcommand{\ecal}{\varepsilon}
\newcommand{\lcal}{\ell}

\newcommand{\Embb}{\mathbb{E}}
\newcommand{\Rmbb}{\mathbb{R}}

\newcommand{\Smbb}{\mathbb{S}}

\newcommand{\CBS}{\mathrm{CBS}}

\DeclareMathOperator{\sign}{sign}
\DeclareMathOperator{\matsign}{matsign}
\DeclareMathOperator{\tr}{tr}

\DeclareMathOperator{\rank}{rank}

\pgfplotscreateplotcyclelist{myColors}{
    {orange, very thick},
    {olive, very thick},
    {blue, very thick},
}
\pgfplotsset{
    Llama3_perplexity_plots/.style={
        height=4cm, width=4.3cm, 
        grid=major,
        scaled x ticks=false, 
        label style={font=\scriptsize},      
        tick label style={font=\scriptsize}, 
        tick style={draw=none},
        ylabel style={yshift=-1.5em},
        xlabel style={yshift=0.5em},
        ymode=log,
        ymin=20, ymax = 1e4,
        cycle list name=myColors, 
        legend columns=3,             
        legend style={font=\scriptsize, column sep=10pt} 
    }
}

\pgfplotsset{
    Llama3_dissect_plots/.style={
        height=4.5cm, width=4.5cm, 
        grid=major,
        scaled x ticks=false, 
        label style={font=\scriptsize},      
        tick label style={font=\scriptsize}, 
        tick style={draw=none},
        ylabel style={yshift=-1.5em},
        xlabel style={yshift=0.5em},
        cycle list name=color list, 
    }
}

\pgfplotsset{
  Imagewoof_val_loss_plots/.style={
    height=4.5cm,width=4.5cm,
    grid=major,
    scaled x ticks=false,
    label style={font=\scriptsize},
    tick label style={font=\scriptsize},
    tick style={draw=none},
    ylabel style={yshift=-1.5em},
    xlabel style={yshift=0.5em},
    cycle list name=color list,
    every axis plot/.append style={line width=1.1pt},
    every axis plot post/.append style={mark=none},
    legend image post style={mark=none},
    legend style={at={(0.65,0.98)},anchor=north,draw=black,fill=white,fill opacity=0.9,text opacity=1,font=\scriptsize,inner sep=2pt,row sep=-2pt,cells={anchor=west}},
    xmin=0,xmax=15000,
    xtick={0,5000,10000,15000},
    xticklabels={0,5k,10k,15k},
    ymode=normal,
    ymin=0.5,ymax=2.5,
    ytick={1,1.5,2,2.5,3},
    yticklabel style={/pgf/number format/fixed,/pgf/number format/precision=1}
  }
}

\pgfplotsset{
  Imagewoof_val_loss_plots_gns_type/.style={
    height=4.5cm,width=4.5cm,
    grid=major,
    scaled x ticks=false,
    label style={font=\scriptsize},
    tick label style={font=\scriptsize},
    tick style={draw=none},
    ylabel style={yshift=-1.5em},
    xlabel style={yshift=0.5em},
    cycle list name=color list,
    every axis plot/.append style={line width=1.1pt},
    every axis plot post/.append style={mark=none},
    legend image post style={mark=none},
    legend style={at={(0.7,0.98)},anchor=north,draw=black,fill=white,fill opacity=0.9,text opacity=1,font=\scriptsize,inner sep=2pt,row sep=-2pt,cells={anchor=west}},
    xmin=0,xmax=3000,
    xtick={0,1000,2000,3000},
    xticklabels={0,1k,2k,3k},
    ymode=normal,
    ymin=0.5,ymax=3,
    yticklabel style={/pgf/number format/fixed,/pgf/number format/precision=1}
  }
}

\definecolor{colorA}{RGB}{68, 1, 84}    
\definecolor{colorB}{RGB}{59, 82, 139}  
\definecolor{colorC}{RGB}{33, 145, 140} 
\definecolor{colorD}{RGB}{94, 201, 98}  
\definecolor{colorE}{RGB}{253, 231, 37} 

\newcommand{\sgd}{SGD\xspace}
\newcommand{\msgd}{MSGD\xspace}
\newcommand{\signsgd}{signSGD\xspace}
\newcommand{\signum}{Signum\xspace}
\newcommand{\adam}{Adam\xspace}
\newcommand{\adamw}{AdamW\xspace}
\newcommand{\muon}{Muon\xspace}
\newcommand{\specgd}{specSGD\xspace}

\newcommand{\nsgd}{NormalizedSGD\xspace}

\usepackage{amsmath}
\usepackage{amssymb}
\usepackage{mathtools}
\usepackage{amsthm}
\usepackage[dvipsnames,table]{xcolor}
\colorlet{SGDColor}{Gray!10}
\colorlet{SignSGDColor}{Maroon!10}
\colorlet{SpecSGDColor}{MidnightBlue!10}

\usepackage[capitalize,noabbrev]{cleveref}
\usetikzlibrary{arrows.meta, backgrounds, shadows.blur}

\theoremstyle{plain}
\newtheorem{theorem}{Theorem}[section]

\newtheorem{lemma}[theorem]{Lemma}

\theoremstyle{definition}
\newtheorem{definition}[theorem]{Definition}

\theoremstyle{remark}

\usepackage[textsize=tiny]{todonotes}

\icmltitlerunning{Adaptive Batch Sizes Using Non-Euclidean Gradient Noise Scales for Stochastic Sign and Spectral Descent}

\begin{document}

\twocolumn[
  \icmltitle{Adaptive Batch Sizes Using Non-Euclidean Gradient Noise Scales for Stochastic Sign and Spectral Descent}



  \icmlsetsymbol{equal}{*}

  \begin{icmlauthorlist}
    \icmlauthor{Hiroki Naganuma}{mila,udem}
    \icmlauthor{Shagun Gupta}{meta}
    \icmlauthor{Youssef Briki}{udem}
    \icmlauthor{Ioannis Mitliagkas}{mila,udem}
    \icmlauthor{Irina Rish}{mila,udem}
    \icmlauthor{Parameswaran Raman}{meta}
    \icmlauthor{Hao-Jun Michael Shi}{meta}
  \end{icmlauthorlist}

  \icmlaffiliation{mila}{Mila, Montreal, Canada}
  \icmlaffiliation{meta}{Meta Platforms, Menlo Park, California, USA}
  \icmlaffiliation{udem}{Université de Montréal, Montreal, Canada}

  \icmlcorrespondingauthor{Hiroki Naganuma}{naganuma.hiroki@mila.quebec}
  \icmlcorrespondingauthor{Shagun Gupta}{shagun@meta.com}

  \icmlkeywords{Machine Learning, ICML}

  \vskip 0.3in
]



\printAffiliationsAndNotice{}  

\begin{abstract}
To maximize hardware utilization, modern machine learning systems typically employ large constant or manually tuned batch size schedules, relying on heuristics that are brittle and costly to tune. Existing adaptive strategies based on gradient noise scale (GNS) offer a principled alternative. However, their assumption of SGD's Euclidean geometry creates a fundamental mismatch with popular optimizers based on generalized norms, such as \signsgd{} / \signum ($\ell_\infty$) and stochastic spectral descent (\specgd) / \muon ($\mathcal{S}_\infty$). In this work, we derive gradient noise scales for \signsgd and \specgd that naturally emerge from the geometry of their respective dual norms. To practically estimate these non-Euclidean metrics, we propose an efficient variance estimation procedure that leverages the local mini-batch gradients on different ranks in distributed data-parallel systems. Our experiments demonstrate that adaptive batch size strategies using non-Euclidean GNS enable us to match the validation loss of constant-batch baselines while reducing training steps by up to 66\% for \signum and \muon on a 160 million parameter Llama model.
\end{abstract}


\section{Introduction} 
\label{sec:intro}

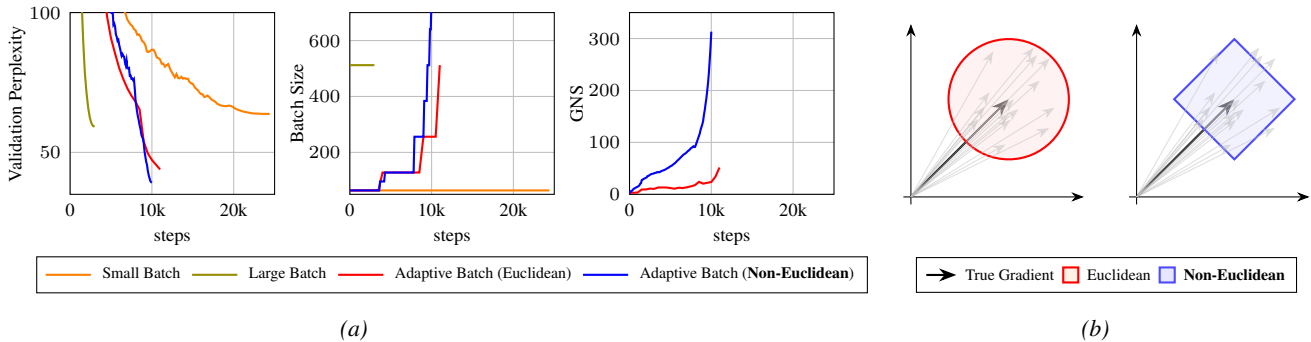
\begin{figure*}[ht]
    \centering
    \begin{subfigure}{0.55\textwidth}
        \centering
        \begin{tikzpicture} 
            \begin{groupplot}[
                Llama3_dissect_plots,     
                height=4cm, width=4.3cm, 
                legend columns=-1,          
                legend style={
                    font=\tiny,
                    fill opacity=1,
                    column sep=2pt,         
                },
                group style={
                    group size=3 by 1,       
                    horizontal sep=1cm,    
                }
            ]
                \nextgroupplot[
                    restrict y to domain=0:200,
                    xlabel={steps},
                    ylabel={Validation Perplexity},
                    ymin=35, ymax=100,
                    xmin=0, xmax=12000,
                    ytick={50, 100},
                    xtick={0, 5000, 10000},
                    xticklabels={0, 5k, 10k}, 
                    legend to name=CommonLegend, 
                ]
                \addplot[orange, thick] table [x=_step, y=perplexity, col sep=comma] {Llama_results/introduction/baseline_64.csv};
                \addlegendentry{Small Batch}
                \addplot[olive, thick] table [x=_step, y=perplexity, col sep=comma] {Llama_results/introduction/baseline_512.csv};
                \addlegendentry{Large Batch}
                \addplot[red, thick] table [x=_step, y=perplexity, col sep=comma] {Llama_results/introduction/gns_l2.csv};
                \addlegendentry{Adaptive Batch (Euclidean)}
                \addplot[blue, thick] table [x=_step, y=perplexity, col sep=comma] {Llama_results/introduction/gns_l1.csv};
                \addlegendentry{Adaptive Batch (\textbf{Non-Euclidean})}
                
                \nextgroupplot[
                    xlabel={steps},
                    ylabel={Batch Size},
                    ymin=50, ymax=800,
                    xmin=0, xmax=12000,
                    xtick={0, 5000, 10000},
                    xticklabels={0, 5k, 10k}, 
                ]
                \addplot[orange, thick] table [x=_step, y=batch_size/global_batch_size, col sep=comma] {Llama_results/introduction/baseline_64.csv};
                \addplot[olive, thick] table [x=_step, y=batch_size/global_batch_size, col sep=comma] {Llama_results/introduction/baseline_512.csv};
                \addplot[red, thick] table [x=_step, y=batch_size/global_batch_size, col sep=comma] {Llama_results/introduction/gns_l2.csv};
                \addplot[blue, thick] table [x=_step, y=batch_size/global_batch_size, col sep=comma] {Llama_results/introduction/gns_l1.csv};
                
                \nextgroupplot[
                    xlabel={steps},
                    ylabel={GNS},
                    ymin=0, ymax=350,
                    xmin=0, xmax=12000,
                    xtick={0, 5000, 10000},
                    xticklabels={0, 5k, 10k}, 
                ]
                \addplot[red, thick] table [x=_step, y=adaptive_batch_metrics/gns_ema, col sep=comma] {Llama_results/introduction/gns_l2.csv};
                \addplot[blue, thick] table [x=_step, y=adaptive_batch_metrics/gns_ema, col sep=comma] {Llama_results/introduction/gns_l1.csv};
            \end{groupplot}
            \node[below=0.7cm] at (group c2r1.south) {\pgfplotslegendfromname{CommonLegend}};
        \end{tikzpicture}
        \caption{}
        \label{fig:GNS_intro_plots}
    \end{subfigure}
    \hfill
    \begin{subfigure}{0.3\textwidth}
        \begin{tikzpicture}[>=Stealth]
            \path (0,4) -- (0,0);
            \def\yc{1.4}
            \def\tgrad{1.3}
            \def\rad{0.8}
            \def\axis{2.3}
            \def\xc{0}
            \def\xcc{3}

            \draw[->] (\xc-0.1,\yc) -- (\xc+\axis,\yc);
            \draw[->] (\xc,\yc-0.1) -- (\xc,\yc+\axis);
            \draw[->] (\xcc-0.1,\yc) -- (\xcc+\axis,\yc);
            \draw[->] (\xcc,\yc-0.1) -- (\xcc,\yc+\axis);

            \draw[thick, black, ->] (\xc, \yc) -- (\xc + \tgrad, \yc + \tgrad);
            \draw[thick, black, ->] (\xcc, \yc) -- (\xcc + \tgrad, \yc + \tgrad);
    
            \draw[thick, red, fill=red!10, fill opacity=0.5] (\xc+\tgrad,\yc + \tgrad) circle (\rad);
    
            \draw[thick, blue!70, fill=blue!10, fill opacity=0.5] 
                    (\xcc+\tgrad+\rad, \yc+\tgrad) -- 
                    (\xcc+\tgrad, \yc+\tgrad+\rad) -- 
                    (\xcc+\tgrad-\rad, \yc+\tgrad) -- 
                    (\xcc+\tgrad, \yc+\tgrad-\rad) -- cycle;

            \def\numsamples{15}
            \def\anglespread{25} 
            \def\magspread{0.7} 
            
            \foreach \i in {1,...,\numsamples} {
                \pgfmathsetmacro{\rAngle}{45 + \anglespread*rand}
                \pgfmathsetmacro{\rMag}{\tgrad*1.5 + \magspread*rand}
                
                \draw[gray!50, thin, ->, opacity=0.5] (\xc, \yc) -- 
                    ({\xc + \rMag*cos(\rAngle)}, {\yc + \rMag*sin(\rAngle)});
                \draw[gray!50, thin, ->, opacity=0.5] (\xcc, \yc) -- 
                    ({\xcc + \rMag*cos(\rAngle)}, {\yc + \rMag*sin(\rAngle)});
            }
            \matrix [
                draw, 
                below, 
                yshift=0.6cm, 
                column sep=2pt, 
                nodes={
                    anchor=center, 
                    font=\tiny,
                    inner sep=1pt,
                }
            ] at (current bounding box.south) {
                \draw[black, thick, ->] (0,0) -- (0.4,0); & \node{True Gradient}; & 
                \node[draw=red, thick, fill=orange!10, minimum size=2mm] {}; & \node{Euclidean}; &
                \node[draw=blue!70, thick, fill=blue!10, minimum size=2mm] {}; & \node{\textbf{Non-Euclidean}}; \\
            };
        \end{tikzpicture}
        \caption{}
        \label{fig:GNS_geometry}
    \end{subfigure}
    \caption{
    In \textbf{\ref{fig:GNS_intro_plots} (left)}, we plot the validation perplexity, batch size, and the exponential moving average of the GNS over steps when training the 160M parameter Llama 3 model for 3.2B tokens on C4 data using \signsgd. 
    We compare small-batch ($B = 64$), large-batch ($B = 512$), and adaptive batch size strategies based on Euclidean and non-Euclidean GNS. 
    The plots highlight the improved efficiency of using non-Euclidean GNS measurements for \signsgd. 
    In \textbf{\ref{fig:GNS_geometry} (right)}, we depict how enforcing different GNS ($\ell_2$ versus $\ell_1$) changes the geometry of the allowable region for stochastic gradients (with high probability).
    }
    \label{fig:introduction_figure}
\end{figure*}

As model architectures and datasets scale exponentially, maximizing hardware utilization has become a critical objective in training modern machine learning models. This is typically achieved by scaling the batch size, which improves GPU throughput by processing a greater number of samples in parallel and reduces the total number of optimizer steps, ultimately shortening the end-to-end wall-clock training time \citep{smith2017don,mccandlish2018empirical}.
Despite these benefits, constant large batch sizes suffer from diminishing returns in per-sample improvement. This reduction in sample efficiency compromises model quality under fixed training budgets \citep{keskar2016large,smith2017don,shallue2019measuring,nado2021large,marek2025small}.

To manage this trade-off, practitioners employ heuristic schedules that increase the batch size during training, such as through \emph{stage-wise} \citep{rae2021scaling,hoffmann2022training,chowdhery2023palm,grattafiori2024llama,groeneveld2024olmo} or \emph{linear ramp-ups} \citep{mann2020language,smith2022using,liu2024deepseek}.
These methods attempt to balance small-batch efficiency with large-batch throughput; yet, they are brittle and demand significant tuning.
Without a principled, computable metric to guide the schedule, practitioners lack a reliable way of predicting when increasing the batch size will compromise sample efficiency.

A more principled alternative involves dynamically choosing the batch size by monitoring the \emph{gradient noise scale} (GNS), a metric that quantifies the relative error in the gradient estimate \citep{byrd2012sample, mccandlish2018empirical, chen2025divebatch}. 
By adjusting the batch size to track this signal, one can avoid the diminishing return associated with exceeding the \emph{critical batch size} (CBS).
Although various formalizations of the CBS exist, they share the objective of identifying the threshold above which further increases in batch size yield negligible gains in per-sample improvement. 
Theoretically, this threshold is defined as a constant multiple of the GNS (see \Cref{sec:alg}). 
In this work, we focus on designing GNS metrics to characterize and implement adaptive batch size strategies.

While previous research derived the CBS and GNS based on the descent lemma for the stochastic gradient method (\sgd) \citep{mccandlish2018empirical} and its underlying Euclidean geometry, modern vision and language workloads favor optimizers that operate in \emph{non-Euclidean geometries}, such as sign-based methods \citep{bernstein2018signsgd} and spectral methods \cite{carlson2015stochastic,carlson2015preconditioned,bernstein2024old,jordan2024muon}.
Furthermore, popular optimizers like Adam \citep{kingma2014adam}, Shampoo \citep{gupta2018shampoo}, and SOAP \citep{vyas2024soap,eschenhagen2025purifying} have been increasingly interpreted from this non-Euclidean lens, sharing fundamental geometric properties with sign and spectral updates \citep{balles2018dissecting,orvieto2025search}.
Consequently, the current literature suffers from a fundamental misalignment: using Euclidean-based GNS metrics to determine batch sizes dynamically for non-Euclidean optimizers.

To bridge this gap, we derive metrics for stochastic sign descent (\signsgd) and stochastic spectral descent (\specgd) that measure the gradient noise scale in the dual norm of the optimizer's geometry (i.e., $\ell_1$ norm for sign-based optimizers and nuclear norm for spectral optimizers).
When analyzed with respect to their corresponding geometry, the primary source of error in \signsgd and \specgd is induced by the \emph{bias} in the search direction, which can be bounded by the expected dual norm of the gradient error.
Leveraging these refined GNS statistics yields substantial empirical benefits in practice, as illustrated in \Cref{fig:introduction_figure}.

Efficiently estimating GNS metrics for adaptive batch sizes remains a challenge due to high memory and computational overhead.
Prior approaches that calculate variance using individual samples are memory-intensive and untenable for large-scale training \citep{bollapragada2018adaptive, chen2025divebatch}.
Alternative variance estimators that compare local and global batches are practical but high variance since they rely on a single sample \citep{mccandlish2018empirical, merrill2025critical}.
Instead, we propose to use the local mini-batch gradients across data-parallel ranks as multiple independent samples, appending partial statistics to the gradient \texttt{AllReduce} operation required during distributed training.
This approach leverages the distributed frameworks (e.g., DDP \cite{li2020pytorch} and FSDP \citep{zhao2023pytorch}) already intrinsic to most large machine learning systems to provide a reliable real-time GNS signal.

\paragraph{Contributions.} In this work, we derive \emph{non-Euclidean GNS} metrics for sign-based (\signsgd, \signum) and spectral optimizers (\specgd, \muon). 
We propose an adaptive batch size strategy and a scalable variance estimation procedure tailored for distributed training setups. 
We demonstrate that adaptive batch size strategies guided by these non-Euclidean metrics yield up to a 66\% \emph{reduction in optimizer steps} for language and vision tasks. 
Crucially, our schedules match the validation loss of constant-batch baselines while significantly improving training efficiency.

\paragraph{Conflict of Interest Disclosure.} The authors declare that they have no relevant financial or substantive conflicts of interest to disclose.

\section{Background}
\label{sec:background}

We formulate the training objective as the expected risk minimization problem:
\begin{equation}
    \label{eq:problem}
    \min_{x \in \Rmbb^d}\Lcal(x) = \Embb_{\xi \sim \Dcal} \left[\lcal(x; \xi)\right]
\end{equation}
where $\xi$ is drawn from an underlying stationary distribution $\Dcal$ with support $\Xi$, $\lcal: \Rmbb^d \times \Xi \rightarrow \Rmbb$ is the loss function evaluated on sample $\xi$, and $\Embb$ denotes the expectation with respect to $\Dcal$. 

\subsection{Stochastic Generalized Steepest Descent}
\label{sec:steepest_descent}

We consider stochastic mini-batch variants of the \emph{generalized steepest descent method}. 
At each iteration $k$, the method samples a mini-batch $\Scal_k$ of $B_k$ i.i.d. samples from $\Dcal$, then computes the mini-batch gradient $g_k = \tfrac{1}{B_k}\sum_{\xi \in \Scal_k} \nabla \lcal(x_k; \xi) \in \Rmbb^d$. 
The current iterate $x_k$ is updated via:
\begin{equation} 
    \label{eq:general_update_rule}
    x_{k+1} = x_k - \eta_k p_k, ~~ \text{where} ~~ p_k = \arg \max_{\|p\| \leq 1} \langle g_k, p \rangle.
\end{equation}
The search direction $p_k$ is the steepest descent direction for the stochastic gradient estimate over the chosen norm $\|\cdot\|$ (with corresponding dual norm $\|\cdot\|_{*}$). 
As noted in \citet{wright1999numerical,boyd2004convex} and \citet{beck2017first},
different choices for primal and dual norms ($\|\cdot\|$  / $\|\cdot\|_*$) recover standard optimization algorithms: 
\begin{itemize}
    \item $\ell_2$ norm ($\|\cdot\|_2$ / $\|\cdot \|_2$): Yields normalized SGD, where $p_k = g_k / \|g_k\|_2$.
    \item $\ell_\infty$ norm ($\|\cdot\|_\infty$ / $\|\cdot\|_1$): Yields stochastic sign descent (\signsgd), where $p_k = \sign(g_k)$.
    \item Spectral or Schatten-$\infty$ norm ($\|\cdot \|_{\Scal_\infty}$ / $\|\cdot\|_{\Scal_1}$): For matrix variables $X \in \Rmbb^{m \times n}$, yields stochastic spectral descent (\specgd) with $P_k = \matsign(G_k) = U_k V_k^\top$.\footnote{The Schatten-$1$ norm is the nuclear norm $\|G_k\|_{\Scal_1} = \tr(C_k)$.}
\end{itemize}
Here, the matrix sign function is defined via the reduced SVD $G_k = U_k \Sigma_k V_k^\top$ and taking the sign function of the singular values, i.e., $P_k = U_k V_k^\top$. 
Geometrically, $P_k$ corresponds to the unitary factor of the polar decomposition of $G_k$, which is the closest semi-orthogonal matrix to $G_k$ in Frobenius norm.\footnote{One may also consider un-normalized variants of these methods, which additionally multiply $p_k$ by the dual norm of the gradient $\|g_k\|_*$. In the deterministic setting, these methods share a unified analysis based on smoothness with respect to arbitrary norms $\|\cdot \|$ \citep{balles2020geometry}. However, besides SGD, these variants are not commonly used in practice.}

Other popular optimizers such as \adam \citep{kingma2014adam} and \signsgd with momentum (also known as \signum) \citep{bernstein2018signsgd} as well as Shampoo \citep{gupta2018shampoo} and \muon \citep{jordan2024muon} are known to reduce to \signsgd and \specgd as special cases, respectively \citep{balles2018dissecting, bernstein2024old, orvieto2025search}. 
While \specgd was originally suggested for training neural networks by \citet{carlson2015stochastic,carlson2015preconditioned}, it has recently resurged in popularity due to its connection to the theory of modular duality \citep{bernstein2024modular} and the \muon optimizer \citep{jordan2024muon,liu2025muon}, which utilizes momentum and the Newton-Schulz iteration to efficiently approximate the semi-orthogonalization of the gradient.
These interpretations motivate our study of adaptive batch sizes for these fundamental methods.

\subsection{Critical Batch Size and Gradient Noise Scale}
\label{sec:background_gns}

Training with large batch sizes is known to degrade model quality and sample efficiency after exceeding a certain point, often referred to as the \emph{critical batch size} (CBS) \citep{keskar2016large, shallue2019measuring, mccandlish2018empirical}. 
Theoretically, the CBS is defined as the batch size that minimizes the total computational complexity (e.g., stochastic first-order oracle complexity). 
Recent work has applied this framework to derive closed-form CBS expressions for specific optimizers like \muon \citep{sato2025analysis}. 

To mitigate this degradation in practice, various heuristics have been proposed. 
Most notable among these are learning rate warmup and scaling rules: \emph{linear scaling} \citep{goyal2017accurate, smith2017don} and \emph{square root scaling} \citep{krizhevsky2014one, hoffer2017train}. 
Additionally, layerwise adaptive methods like LARS \citep{you2017large} and LAMB \citep{you2019large} were developed to stabilize training by adjusting per-layer learning rates based on the ratio of weight to gradient norms.


Despite the success of these techniques, training with a fixed batch size inevitably encounters a regime of diminishing returns. 
Since the number of samples consumed by each optimizer step scales with the batch size, the optimizer must make more progress per-step to maintain sample efficiency. 
While this is typically achieved by scaling the learning rate to leverage the gradient's reduced variance, this strategy is limited by the deterministic properties of the loss landscape, i.e., the maximum curvature \citep{wright1999numerical, mandt2017stochastic,gilmer2021loss}. 
Consequently, further reducing the gradient variance by increasing the batch size beyond the CBS provides little additional benefit per step.


\paragraph{Euclidean gradient noise scale.} The seminal work by \citet{mccandlish2018empirical} derives a formal notion of CBS based on the optimal expected single-step improvement in the loss function for (un-normalized) SGD.\footnote{Note that CBS is an overloaded term: In the systems literature, it often refers to the maximum efficient batch size for an entire training run \citep{shallue2019measuring}. Here, we refer to the \emph{instantaneous} CBS at a specific optimization step $k$.}
Let $C_k \in \Smbb^{d}_+$ denote the gradient covariance matrix at step $k$: 
\begin{equation*}
    C_k = \Embb_k[(\nabla \ell(x_k; \xi) - \nabla \Lcal(x_k)) (\nabla \ell(x_k; \xi) - \nabla \Lcal(x_k))^\top]
\end{equation*}
where $\Embb_k[\cdot] = \Embb[ ~\cdot \mid x_k]$ denotes the conditional expectation given $x_k$.
By leveraging a quadratic approximation of the loss function (assuming $\nabla^2 \Lcal(x_k) \approx I$ \cite{mccandlish2018empirical}):
\begin{equation} \label{eq:loss_change_initial_approximation}
    \Lcal(x_{k + 1}) = \Lcal(x_k) - \eta_k \langle \nabla \Lcal(x_k), g_k \rangle + \frac{\eta_k^2}{2} \| g_k \|_2^2,
\end{equation}
one can derive the \emph{expected one-step improvement} for SGD:
\begin{equation}
    \begin{split}
        & \Delta \Lcal(x_k, \eta_k, \| \cdot \|_2, B_k) = \Embb_k [\Lcal(x_k) - \Lcal(x_{k + 1})] \\
        & ~~ = \eta_k \|\nabla \Lcal(x_k)\|_2^2 - \frac{\eta_k^2}{2} \left(\|\nabla \Lcal(x_k)\|_2^2 + \frac{\tr(C_k)}{B_k} \right).
    \end{split}
    \label{eq:SGD_expected_loss_improv}
\end{equation}
Here, the equality follows from the bias-variance decomposition $\Embb_k[\|g_k\|_2^2] = \|\nabla \Lcal(x_k)\|_2^2 + \Embb_k[\|g_k - \nabla \Lcal(x_k)\|_2^2]$.

Maximizing the expected improvement in \Cref{eq:SGD_expected_loss_improv} with respect to $\eta_k$ yields the optimal learning rate:
\begin{equation}
    \label{eq:sgd_optimal_lr}
    \eta_k^*(B_k) = \left(1 + \frac{1}{B_k} \cdot \frac{\tr(C_k)}{\|\nabla \Lcal(x_k)\|_2^2} \right)^{-1}.
\end{equation}
Plugging $\eta_k^*(B_k)$ into \Cref{eq:SGD_expected_loss_improv} yields the optimal expected improvement, which is proportional to $\eta_k^*(B_k)$. 
Intrinsic to both quantities is the \emph{Euclidean GNS}, denoted as $\Bcal_{\lcal_2}(x_k)$, which governs the trade-off between the batch size and the realizable progress:
\begin{tcolorbox}[
    enhanced,
    colback=white,       
    colframe=black,      
    coltitle=black,      
    title=\textbf{Euclidean ($\ell_2$) GNS for \sgd},
    sharp corners,       
    boxrule=0.8pt,       
    bottom=4mm,
    attach boxed title to top left={xshift=0.5cm, yshift*=-0.8\baselineskip},
    boxed title style={
        frame hidden,    
        colback=white,   
        left=1mm, right=1mm 
    }
]
\begin{equation} \label{eq:l2_GNS}
    \Bcal_{\lcal_2}(x_k) = \frac{\tr(C_k)}{\|\nabla \Lcal(x_k)\|_2^2}.
\end{equation}
\end{tcolorbox}

Practitioners utilize this metric ($\Bcal = \Bcal_{\ell_2}$) to select the batch size:
\begin{tcolorbox}[
    enhanced,
    colback=white,       
    colframe=black,      
    coltitle=black,      
    title=\textbf{GNS $\rightarrow$ Batch Size Rule},
    sharp corners,       
    boxrule=0.8pt,       
    bottom=4mm,
    attach boxed title to top left={xshift=0.5cm, yshift*=-0.7\baselineskip},
    boxed title style={
        frame hidden,    
        colback=white,   
        left=1mm, right=1mm 
    }
]
\begin{equation} \label{eq:gns_to_batch_size}
    B_k = \theta^{-2} \Bcal(x_k), ~~~ \theta \in (0, 1).
\end{equation}
\end{tcolorbox}
The tolerance parameter $\theta \in (0, 1)$ enforces a constant noise-to-signal ratio in the gradient estimate.
This rule is equivalent to the \emph{norm test} proposed in the classical optimization literature to ensure linear convergence for strongly convex functions \citep{byrd2012sample}:
\begin{equation*}
    \Embb [\|g_k - \nabla \Lcal(x_k)\|_2^2] \leq \theta^2 \|\nabla \Lcal(x_k)\|_2^2.
\end{equation*}
Alternative adaptive strategies utilizing gradient diversity \citep{chen2025divebatch}, angular distances \citep{bollapragada2018adaptive,bollapragada2018progressive, bahamou2019dynamic, xu2020dynamically}, local branch training \citep{merrill2025critical}, composite metrics \citep{belias2025one}, and the learning rate schedule \citep{devarakonda2017adabatch,smith2017don} have also been proposed.

While this analysis provides a robust framework for SGD relying on the Euclidean structure in \eqref{eq:SGD_expected_loss_improv}, recent attempts to apply it to non-Euclidean methods \citep{gray2024normalization,lau2024adaptive} inherently ignore the optimizer's geometry.
In the following section, we show how to formally extend these ideas to \signsgd and \specgd.



\section{Non-Euclidean GNS} 
\label{sec:alg}

To generalize the theory of CBS and GNS beyond Euclidean settings, we observe that the \emph{dual norm} of the optimizer is the natural metric for quantifying noise in generalized stochastic steepest descent.
This relationship is obscured in the Euclidean case because the $\ell_2$ norm is its own dual.

For \signsgd and \specgd, the stochastic steepest descent direction is \emph{biased}, i.e., $\Embb_k[p_k] \neq \arg \max_{\|p\| \leq 1} \langle \nabla \Lcal(x_k), p\rangle$. 
We control this bias term by controlling the GNS. 
To make this problem tractable, we derive the GNS by maximizing a lower bound on the expected single-step improvement:
\begin{lemma}
    \label[lemma]{lem:general_norm_inner_product}
     Let $\|\cdot\|$ be a general norm and $\|\cdot\|_*$ be its associated dual norm. The stochastic steepest descent direction $p_k = \arg\max_{\|p\| \leq 1} \langle g_k, p\rangle$ satisfies:
    \begin{equation*}
        \Embb_k \left[\langle \nabla \Lcal (x_k), p_k \rangle \right] \geq \left\|\nabla \Lcal(x_k)\right\|_* - \Embb_k \left[\left\|\nabla \Lcal (x_k) - g_k\right\|_* \right].
    \end{equation*}
\end{lemma}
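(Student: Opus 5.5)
The plan is a pointwise decomposition followed by one application of Jensen's inequality. Fix the iterate $x_k$ and write $\nabla \Lcal(x_k) = g_k + (\nabla \Lcal(x_k) - g_k)$. By definition of $p_k$ as a maximizer over the unit ball, $\langle g_k, p_k\rangle = \max_{\|p\|\leq 1}\langle g_k, p\rangle = \|g_k\|_*$, which is just the definition of the dual norm. For the error term we use the generalized Cauchy--Schwarz (H\"older) inequality $\langle v, p\rangle \geq -\|v\|_*\|p\|$, together with $\|p_k\| \leq 1$. Combining the two gives the pointwise bound
\begin{equation*}
    \langle \nabla \Lcal(x_k), p_k\rangle = \langle g_k, p_k\rangle + \langle \nabla \Lcal(x_k) - g_k, p_k\rangle \geq \|g_k\|_* - \|\nabla \Lcal(x_k) - g_k\|_*,
\end{equation*}
which holds for every realization of the mini-batch $\Scal_k$.

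Next we take the conditional expectation $\Embb_k$ of both sides. This yields $\Embb_k[\langle \nabla \Lcal(x_k), p_k\rangle] \geq \Embb_k[\|g_k\|_*] - \Embb_k[\|\nabla \Lcal(x_k) - g_k\|_*]$. It remains to lower bound $\Embb_k[\|g_k\|_*]$ by $\|\nabla \Lcal(x_k)\|_*$. The dual norm is convex, and $g_k$ is an average of i.i.d.\ sample gradients, so it is conditionally unbiased: $\Embb_k[g_k] = \nabla \Lcal(x_k)$. This unbiasedness assumes the usual interchange of gradient and expectation, which is implicit in the setup of \Cref{eq:problem}. Jensen's inequality then gives
\begin{equation*}
    \Embb_k[\|g_k\|_*] \geq \|\Embb_k[g_k]\|_* = \|\nabla \Lcal(x_k)\|_*.
\end{equation*}
Substituting this into the previous bound gives exactly the stated inequality, with constant $1$ in front of the error term.

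There is no real obstacle in this argument. The only care needed is to avoid the tempting route of applying the triangle inequality $\|g_k\|_* \geq \|\nabla \Lcal(x_k)\|_* - \|g_k - \nabla \Lcal(x_k)\|_*$ pointwise. That route doubles the error term and gives the weaker bound with constant $2$. Applying Jensen to $\Embb_k[\|g_k\|_*]$ after taking expectations is what preserves the sharp constant. The remaining technical points are minor. If the argmax defining $p_k$ is not unique, for example when $g_k$ has zero entries for \signsgd{} or is rank deficient for \specgd{}, then any measurable selection works, since the argument only uses $\|p_k\|\leq 1$ and $\langle g_k,p_k\rangle=\|g_k\|_*$. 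We also assume integrability of $\|g_k\|_*$, which follows from finite gradient variance by equivalence of norms in finite dimensions.
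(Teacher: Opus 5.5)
Your proof is correct and follows the same route as the paper. Both establish the pointwise bound $\langle \nabla \Lcal(x_k), p_k\rangle \geq \|g_k\|_* - \|\nabla \Lcal(x_k) - g_k\|_*$ from the dual-norm definition and H\"older's inequality, then take $\Embb_k$ and apply Jensen's inequality using $\Embb_k[g_k] = \nabla \Lcal(x_k)$, which the appendix version of the lemma states as an explicit hypothesis; your remarks on non-unique maximizers and integrability are sound refinements the paper leaves implicit.
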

The proof for \Cref{lem:general_norm_inner_product} is provided in Appendix~\ref{sec:theory}.

Again using a quadratic approximation of the loss function, we generalize \Cref{eq:loss_change_initial_approximation} as: 
\begin{equation}
    \label{eq:general_loss_change}
    \Lcal(x_{k + 1}) = \Lcal(x_k) - \eta_k \langle \nabla \Lcal(x_k), p_k \rangle + \frac{\eta_k^2}{2} \langle p_k, p_k \rangle.
\end{equation}

Taking the expectation of \Cref{eq:general_loss_change} given $x_k$ and using \Cref{lem:general_norm_inner_product}, we obtain the key inequality:
\begin{equation}
    \label{eq:non_euclidean_improvement_initial_bound}
    \begin{split}
        &\Delta \Lcal (x_k, \eta_k, \|\cdot\|, B_k)  \\
        & ~~ \geq \eta_k \left\|\nabla \Lcal(x_k)\right\|_* - \eta_k \Embb_k\left[\|\nabla \Lcal (x_k) - g_k\|_*\right] \\
        & \quad \quad - \frac{\eta_k^2}{2} \Embb_k\left[\langle p_k, p_k \rangle \right].
    \end{split}
\end{equation}
Since the update has unit norm, the quadratic term $\Embb_k\left[\langle p_k, p_k \rangle \right]$ is constant and easily evaluated (e.g., $d$ for \signsgd).
Therefore, the primary challenge is in evaluating the expectation of the gradient estimation error measured in the dual norm, $\Embb_k\left[\|\nabla \mathcal{L} (x_k) - g_k\|_* \right]$. 
We characterize this error for \signsgd and \specgd to derive their respective GNS.

\subsection{Stochastic Sign Descent (\signsgd)}

As seen in \cref{sec:steepest_descent}, the search direction $p_k = \sign(g_k)$ corresponds to the steepest descent direction under the $\ell_\infty$ norm. 
Let $\sigma_k \in \Rmbb^d$ be the vector consisting of the component-wise standard deviations of the single-sample gradient $\nabla \ell(x_k; \xi)$. That is, for each component $i = 1, ..., d$:
\begin{equation*}
    [\sigma_k]_i^2 = \Embb \left[([\nabla \ell(x_k; \xi) - \nabla \Lcal(x_k)]_i)^2 \right].
\end{equation*}

To bound the key inequality in \Cref{eq:non_euclidean_improvement_initial_bound}, we utilize the following inequality which bounds the expected $\ell_1$ norm of the gradient error \citep{bernstein2018signsgd}:\footnote{We provide complete statements and proofs in Appendix~\ref{sec:theory}.}
\begin{equation} 
\label{eq:signSGD_error_bound}
    \Embb_k\left[\|\nabla \Lcal(x_k) - g_k\|_1 \right] \leq \frac{\|\sigma_k\|_1}{\sqrt{B_k}}.
\end{equation}
The dependence on $\sqrt{B_k}$ ensures that the error vanishes as the batch size increases. One could obtain a tighter bound on the error by estimating $\Embb_k\left[\|\nabla \Lcal(x_k) - g_k\|_1 \right]$ directly.

Substituting \cref{eq:signSGD_error_bound} and the identity $\langle \sign(g_k), \sign(g_k) \rangle \leq d$ into \Cref{eq:non_euclidean_improvement_initial_bound}, we obtain the specific lower bound for \signsgd:
\begin{equation}
\label{eq:signSGD_expected_loss_improv}
    \begin{split}
        & \Delta \Lcal (x_k, \eta_k, \|\cdot\|_\infty, B_k) \\ 
        & \quad \quad \geq \eta_k \left( \left\|\nabla\Lcal(x_{k})\right\|_1  - \frac{\|\sigma_k\|_1}{\sqrt{B_k}} \right) - \frac{d \eta_k^2}{2}.
    \end{split}
\end{equation}    
The learning rate that maximizes this lower bound is:
\begin{equation}
    \label{eq:sign_optimal_lr}
    \eta_k^*(B_k) = \frac{\left\|\nabla\Lcal(x_{k})\right\|_1}{d} \left(1 - \frac{1}{\sqrt{B_k}} \cdot \frac{\|\sigma_k\|_1}{\|\nabla \Lcal(x_k)\|_1} \right).
\end{equation}
The structure in $\eta_k^*(B_k)$ reveals the \emph{Manhattan ($\ell_1$) GNS}:
\begin{tcolorbox}[
    enhanced,
    colback=white,       
    colframe=black,      
    coltitle=black,      
    title=\textbf{Manhattan ($\ell_1$) GNS for \signsgd},
    sharp corners,       
    boxrule=0.8pt,       
    bottom=4mm,
    attach boxed title to top left={xshift=0.5cm, yshift*=-0.8\baselineskip},
    boxed title style={
        frame hidden,    
        colback=white,   
        left=1mm, right=1mm 
    }
]
\begin{equation} \label{eq:l1_GNS}
    \Bcal_{\ell_1}(x_k) = \frac{\|\sigma_k\|_1^2}{\left\|\nabla\Lcal(x_{k})\right\|_1^2}.
\end{equation}
\end{tcolorbox}

The ${\ell_1}$ GNS fundamentally differs from the Euclidean ${\ell_2}$ GNS in measuring the gradient norm via the $\ell_1$ norm and noise via the squared sum of standard deviations $\|\sigma_k\|_1^2$.
Geometrically, ${\ell_1}$ GNS controls the probability of correct sign assignment for coordinate-wise updates.

While standard \signsgd is known to converge only to a neighborhood (determined by $\|\sigma_k\|_1$) when using a fixed batch size and learning rate \citep{karimireddy2019error}, adhering to this adaptive schedule ensures the noise term in \Cref{eq:signSGD_expected_loss_improv} remains controlled. This effectively behaves as a variance reduction mechanism similar to error feedback or momentum in \signum \citep{bernstein2018signsgd} (see Appendix~\ref{sec:theory}).
Substituting $B_k=\theta^{-2}\Bcal_{\ell_1}(x_k)$ into \eqref{eq:signSGD_expected_loss_improv} yields the factor $(1 - \theta)^2$, which appears explicitly in our convergence results (see \cref{th:convergence_sign_CBS}).


\subsection{Stochastic Spectral Descent (\specgd)}

Similar to \signsgd, \specgd for matrix weights $X_k \in \Rmbb^{m \times n}$ (assuming $m \leq n$) defines the search direction $P_k = \matsign(G_k)$, which corresponds to the steepest descent direction under the Schatten-$\infty$ (spectral) norm. 
The matrix sign is given by $P_k = U_k V_k^\top$, where $G_k = U_k \Sigma_k V_k^\top$ is the reduced SVD of the gradient. 
Let $C_{\text{row}, k} \in \Smbb^m_+$ be the row-wise aggregated covariance matrix\footnote{The column-wise aggregated covariance matrix $C_{\text{col}, k} = \Embb_k[(\nabla \lcal(X_k; \xi) - \nabla \mathcal{L}(X_k))^\top (\nabla \lcal(X_k; \xi) - \nabla \mathcal{L}(X_k))]$ yields an equivalent nuclear norm  $\|C_{\text{row}, k}^{1/2}\|_{\Scal_1} = \tr(C_{\text{row}, k}^{1/2}) = \tr(C_{\text{col}, k}^{1/2}) = \|C_{\text{col}, k}^{1/2}\|_{\Scal_1}$. Therefore, if $m > n$, we can use $C_{\text{col}, k} \in \Smbb_+^n$ instead to save memory.}, i.e.,
\begin{equation*}
C_{\text{row}, k} =
\Embb_k \left[
\begin{aligned}
&\big(\nabla \lcal(X_k; \xi) - \nabla \mathcal{L}(X_k) \big) \\
&\quad \cdot
\big(\nabla \lcal(X_k; \xi) - \nabla \mathcal{L}(X_k) \big)^\top
\end{aligned}
\right].
\end{equation*}

The analysis is strictly analogous to the \signsgd setting, replacing vector $\ell_1$ norms with nuclear (Schatten-1) norms. 
Specifically, we use a similar error bound:
\begin{equation}
    \label{eq:specGD_error_bound}
    \Embb_k\left[\|\nabla \Lcal(X_k) - G_k\|_{\Scal_1} \right] 
    \leq \frac{\|C_{\text{row}, k}^{1/2}\|_{\Scal_1}}{\sqrt{B_k}}.
\end{equation}
Here, the nuclear norm $\|C_{\text{row}, k}^{1/2}\|_{\Scal_1}$ represents the sum of the singular values of the row-wise covariance matrix.

Substituting \cref{eq:specGD_error_bound} into \Cref{eq:non_euclidean_improvement_initial_bound}, and noting that the inner product of the search direction is $\langle P_k, P_k \rangle = \tr(V_k U_k^\top U_k V_k^\top) = \tr(I_r) = r \leq m$, where $\rank(G_k) = r$, we obtain:
\begin{equation}
\label{eq:spectral_expected_loss_improv}
    \begin{split}
        & \Delta \Lcal (X_k, \eta_k, \|\cdot\|_{\Scal_\infty}, B_k) \\
        & \quad \geq \eta_k \left(\|\nabla\Lcal(X_{k})\|_{\Scal_1} - \frac{\|C_{\text{row}, k}^{1/2}\|_{\Scal_1}}{\sqrt{B_k}} \right) - \frac{r \eta_k^2}{2}.
    \end{split}
\end{equation} 
Maximizing this bound yields the optimal learning rate and expected improvement:
\begin{align}
    \label{eq:spec_optimal_lr}
    \eta_k^*(B_k) & = \frac{\left\|\nabla\Lcal(x_{k})\right\|_{\Scal_1}}{r} \left(1 - \frac{1}{\sqrt{B_k}} \cdot \frac{\|C_{\text{row}, k}^{1/2}\|_{\Scal_1}}{\|\nabla \Lcal(x_k)\|_{\Scal_1}} \right).
\end{align}
This admits the Nuclear ($\Scal_1$) GNS, which mimics the $\ell_1$ GNS in \Cref{eq:l1_GNS}:
\begin{tcolorbox}[
    enhanced,
    colback=white,       
    colframe=black,      
    coltitle=black,      
    title=\textbf{Nuclear ($\Scal_1$) GNS for \specgd},
    sharp corners,       
    boxrule=0.8pt,       
    bottom=4mm,
    attach boxed title to top left={xshift=0.5cm, yshift*=-0.8\baselineskip},
    boxed title style={
        frame hidden,    
        colback=white,   
        left=1mm, right=1mm 
    }
]
\begin{equation} \label{eq:nuclear_GNS}
    \Bcal_{\Scal_1}(x_k) = \frac{ \|C_{\text{row}, k}^{1/2}\|_{\Scal_1}^2}{\|\nabla\Lcal(X_k)\|_{\Scal_1}^2}.
\end{equation}
\end{tcolorbox}

\paragraph{Analogous properties.} 
Most of the geometric intuition and convergence guarantees derived for \signsgd transfer to \specgd, substituting $\lcal_\infty/\lcal_1$ norms with their spectral/nuclear counterparts. 
When $B_k \ll \Bcal_{\Scal_1}$, the noise dominates the gradient's spectral structure, causing the unitary factor $U_k V_k^\top$ to point in a random direction effectively uncorrelated with the true spectral geometry.
Ensuring $B_k \gg \Bcal_{\Scal_1}$ aligns the search direction with the true gradient's dominant singular subspaces.
Consequently, the same adaptive schedule introduces the identical $(1 - \theta)^2$ factor into our convergence results (see \Cref{th:convergence_specGD_CBS}).

Although we use normalized steepest descent for \signsgd and \specgd, the dual norm scaling ($\|\nabla \mathcal{L}(x_k)\|_1$ and $\|\nabla \mathcal{L}(X_k)\|_{\mathcal{S}_1}$) still appears in the optimal learning rates (\Cref{eq:sign_optimal_lr,eq:spec_optimal_lr}) and are essential for proving convergence (see Appendix~\ref{sec:theory}). 
These expressions also omit the influence of the smoothness constant, which affects the learning rate, as addressed in the convergence analysis.

    
    

\begin{table}
    \caption{
    Gradient Noise Scale (GNS) for \sgd, \signsgd, and \specgd expressed as a squared ratio of dual norms.
    Note the consistent pattern: primal norm $\|\cdot\|$, dual norm $\|\cdot\|_*$, GNS $\propto (\|\text{Noise}\|_* / \|\text{Signal}\|_*)^2$.
    }
    \centering
    \small
    \setlength{\tabcolsep}{6pt}
    \begin{tabular}{lcc}
    \toprule
    \textbf{Method} &
    \textbf{Geometry} &
    \textbf{GNS} \\
    \midrule
    
    \sgd &
    $\ell_2$ &
    $\displaystyle 
    \frac{\tr(C_k)}
         {\|\nabla\Lcal(x_k)\|_{2}^{2}}
    $ 
    \\
    \midrule
    
    \signsgd &
    $\ell_{\infty}$&
    $\displaystyle 
        \frac{\|\sigma_k\|_{1}^2}
             {\|\nabla\Lcal(x_k)\|_{1}^2}
    $
    \\
    \midrule
    \specgd &
    $\Scal_\infty$ &
    $\displaystyle 
       \frac{
          \|C_{\text{row}, k}^{1/2}\|_{\Scal_1}^2
       }{
          \|\nabla\Lcal(X_k)\|_{\Scal_1}^2
       }
    $
    \\
    \bottomrule
    \end{tabular}
    \label{table:gns-norm-geometry}
\end{table}

\subsection{Examining Definitions of Critical Batch Size} \label{subsec:cbs_derivations}


To formally link the GNS to CBS, we consolidate the optimal expected single-step improvements for each method by plugging their optimal learning rates (\Cref{eq:sgd_optimal_lr,eq:sign_optimal_lr,eq:spec_optimal_lr}) into their respective expected improvement bounds (\Cref{eq:SGD_expected_loss_improv,eq:signSGD_expected_loss_improv,eq:spectral_expected_loss_improv}):
\begin{align}
    \Delta_k^*(B_k, \|\cdot\|_2)        &= \tfrac{\|\nabla\Lcal(x_{k})\|_2^2}{2}\cdot\left(1 + \tfrac{\Bcal_{\ell_2}(x_k)}{B_k}\right)^{-1}, \nonumber \\
    \Delta_k^*(B_k, \|\cdot\|_\infty)   &= \tfrac{\|\nabla\Lcal(x_k)\|_1^2}{2d}\left(1 - \sqrt{\tfrac{\Bcal_{\lcal_1}(x_k)}{B_k}}\right)^2, \label{eq:delta_star_summary}\\
    \Delta_k^*(B_k, \|\cdot\|_{\Scal_\infty})   &= \tfrac{\|\nabla\Lcal(X_k)\|_{\Scal_1}^2}{2r}\left(1 - \sqrt{\tfrac{\Bcal_{\Scal_1}(x_k)}{B_k}}\right)^2. \nonumber
\end{align}
These quantities increase monotonically with the batch size before eventually saturating, consistent with the empirical behavior observed in \citet{shallue2019measuring}. 


The CBS was originally defined as the ``turning point" in batch size beyond which further increases yield diminishing returns in expected improvement. 
Following \citet{mccandlish2018empirical}, we define this as the batch size $B_\CBS$ required to achieve a specific fraction $\kappa \in (0, 1)$ of the maximum possible deterministic improvement $\Delta_k^*(\infty, \|\cdot\|)$:
\begin{equation*}
    B_{\CBS, k} (\kappa, \|\cdot\|) = \inf \{B_k: \Delta_k^* (B_k, \|\cdot\|) \geq \kappa \Delta_k^* (\infty, \|\cdot\|)\}.
\end{equation*}
Solving this inequality for the specific forms in \Cref{eq:delta_star_summary} yields:
\begin{align}
    B_{\CBS, k} (\kappa, \|\cdot\|_2) 
    &= \frac{\kappa}{1-\kappa}\Bcal_{\lcal_2}(x_k),    \nonumber\\
    B_{\CBS, k} (\kappa, \|\cdot\|_\infty) 
    &= \left(\frac{1}{1-\sqrt{\kappa}}\right)^2\Bcal_{\lcal_1}(x_k), \label{eq:CBS_general_forms}\\
    B_{\CBS, k} (\kappa, \|\cdot\|_{\Scal_\infty}) 
    &= \left(\frac{1}{1-\sqrt{\kappa}}\right)^2\Bcal_{\Scal_1}(x_k). \nonumber
\end{align}
Irrespective of the specific choice of $\kappa$, the CBS is always a linear scaling of the respective GNS. 

\paragraph{Turning point definitions.}

The constant $\kappa$ defines where on the saturation curve the ``point of diminishing returns" occurs. 
Conceptually, this threshold separates two regimes: when $B_k \ll \Bcal(x_k)$, optimization is noise-dominated and scaling the batch size yields a near-linear reduction in training steps; when $B_k \gg \Bcal(x_k)$, the gradient error saturates, rendering additional samples computationally redundant.

While $\kappa = 1/2$ is standard for SGD \citep{mccandlish2018empirical}, the saturation curves for non-Euclidean norms exhibit different geometric properties. 
As seen in \Cref{eq:delta_star_summary}, the SGD improvement curve is strictly concave, whereas the curves for $\ell_\infty$ and $\Scal_1$ are initially convex, then concave. 
This implies that for non-Euclidean optimizers, there exists an inflection point where the rate of marginal gain peaks, motivating alternative definitions of $\kappa$ (see Appendix~\ref{appendix:alt-cbs}).
However, since our derivation relies on lower bounds, the theoretical constant relating CBS to GNS is loose.
Therefore, in practice, we treat the GNS scaling as a tunable hyperparameter $\theta$ to empirically traverse the Pareto frontier between sample efficiency and steps.

\section{Implementation} \label{sec:implementation}

To translate our theoretical findings into a practical system, we introduce a variance estimation framework designed for distributed training.
Our approach treats the local mini-batch gradients on each data-parallel rank as independent samples from the gradient distribution, enabling us to estimate population statistics without expensive auxiliary passes. 
This methodology extends the Euclidean variance estimation techniques from \citet{lau2024adaptive,lau2024adadagrad} to non-Euclidean GNS metrics required for \signsgd and \specgd.

\subsection{Noise Estimation}
\label{sec:noise_estimation}

In a distributed data-parallel (DDP) setup with $R$ ranks, the global batch $\Scal_k$ of size $B_k$ is partitioned into disjoint local batches $\Scal_k^j$ of size $\frac{B_k}{R}$. 
Each rank $j$ computes a local mini-batch gradient $g_k^j= \tfrac{R}{B_k}\sum_{\xi \in \Scal_k^j} \nabla \lcal (x_k, \xi)$. These local updates are aggregated to form the global gradient $g_k = \frac{1}{R} \sum_{j = 1}^{R} g_k^j$ via an \texttt{AllReduce} operation.\footnote{In Fully-Sharded Data-Parallel (FSDP), the \texttt{AllReduce} is decomposed into \texttt{AllGather} and \texttt{ReduceScatter} primitives. Our variance computation hooks are inserted prior to the \texttt{ReduceScatter} to access the unsharded local gradients.} 

We leverage the local mini-batch gradients $g_k^j$ to construct an unbiased estimator for the gradient noise. 
The estimators for the coordinate-wise variance $\hat{\sigma}_k$ and the row-wise covariance matrix $\hat{C}_{\text{row}, k}$ are:
\begin{align} 
    \left[\hat{\sigma}_k\right]_i^2 & = \frac{B_k}{R - 1}\left(\frac{1}{R} \sum_{j = 1}^R [g_k^j]_i^2 - \left[g_k\right]_i^2\right), 
    \label{eq:GNS_estimate_SGD_signSGD}\\
    \hat{C}_{\text{row}, k} & = \frac{B_k}{R-1}\left(\frac{1}{R}\sum_{j=1}^{R} G_k^j (G_k^j)^\top - G_k G_k^\top\right).
    \label{eq:spec_sample_variance_estimate}
\end{align}
The scaling factor $\frac{B_k}{R - 1}$ applies two necessary corrections simultaneously: the Bessel correction $\frac{R}{R-1}$ converts the sample variance of the ranks into an unbiased population estimate, and the factor $\frac{B_k}{R}$ rescales the variance of the mini-batches to the variance of a single sample.

Using the global gradient ($g_k$ or $G_k$) as the signal estimate (following \citet{bollapragada2018adaptive}), we obtain the final computable GNS metrics:
\begin{equation} \label{eq:GNS_estimates}
    \hat{\Bcal}_{\ell_1} = \frac{\|\hat{\sigma}_k\|_1^2}{\|g_k\|_1^2}
    \quad \text{and} \quad
    \hat{\Bcal}_{\Scal_1} = \frac{\|\hat{C}_{\text{row}, k}^{1/2}\|_{\Scal_1}^2}{\|G_k\|_{\Scal_1}^2}.
\end{equation}
All estimations are performed prior to gradient clipping.

\paragraph{Implementation details.} 
For the $\ell_1$ GNS, each rank computes the element-wise square $(g_k^j)^2$ locally.
In standard DDP, this requires storing one additional gradient copy per rank and performing an additional \texttt{AllReduce} to average the squares. 
In FSDP, this memory overhead is mitigated by instead performing a \texttt{ReduceScatter} on the squared gradients, effectively sharding the variance statistics consistent with the parameters.

For the $\Scal_1$ GNS, each rank computes the local Gram matrix $G_k^j (G_k^j)^\top \in \Rmbb^{m \times m}$.
While FSDP can similarly \texttt{ReduceScatter} this term, computing the final nuclear norm $\|\hat{C}_{\text{row}, k}^{1/2}\|_{\Scal_1}$ requires a subsequent \texttt{AllGather} to reconstruct the full covariance matrix on each rank.
Because these computations are restricted to element-wise or block-wise statistics, the metric can be computed independently for each model component, ensuring full compatibility with tensor and pipeline parallelism. We provide a detailed wall-clock, communication, and memory cost analysis of the $\ell_1$ and $\Scal_1$ GNS estimators in Appendix~\ref{app:cost_analysis}.

\subsection{Adaptive Batch Size Heuristics}

We implement several heuristics to stabilize our adaptive strategy (\cref{alg:structure}). First, we apply \emph{exponential moving averages} to the noise ($N_k$) and signal ($M_k$) components to mitigate the impact of stochastic fluctuations.
For instance, for \signsgd:
\begin{equation} \label{eq:var_ema_method}
    \begin{split}
        N_{k} &= \beta^{N} N_{k-1} + (1 - \beta^N)\|\hat{\sigma}_k\|_1^2, \\
        M_{k} &= \beta^{M} M_{k-1} + (1 - \beta^M)\|g_k\|_1^2,
    \end{split}
\end{equation}
yielding the batch size $B_k = N_{k} / (\theta^2 M_{k})$. 
Second, the noise ($N_k$) and signal ($M_k$) components are \emph{updated periodically} to reduce computational overhead \cite{belias2025one}.
Batch size increases are forced to occur after an initial warmup and grow monotonically to ensure scaling aligns with an improving signal-to-noise ratio while preventing reversals caused by transient noise.
Third, to maximize the benefits of lower-variance gradient estimates, we \emph{scale the learning rate proportional to the square root of the batch size} ($\eta_k \propto \sqrt{B_k}$) \cite{merrill2025critical}.
Finally, for composite optimizers that utilize \specgd and \muon for hidden layers and \adamw for 1D parameters, we compute the GNS solely from the 2D parameter groups, as these dominate the model’s total parameter count.

\begin{algorithm}[tb]
  \caption{Adaptive Batch Sizes (Non-Euclidean GNS)}
  \label{alg:structure}
  \begin{algorithmic}[1]
    \STATE {\bfseries Input:} $\beta^N, \beta^M \in [0, 1)$; update frequency $F$; initial warm-up $I$; total steps $K$; $\theta \in (0, 1)$; learning rate sequence $\{\eta_k\}$; learning rate scaling $\omega_0 = 1$.
    \FOR{$k = 0, ..., K - 1$}
        \STATE Compute $g_k$ and estimate $\hat{\sigma}_k$ using \eqref{eq:GNS_estimate_SGD_signSGD} or $\hat{C}_{\text{row}, k}^{1/2}$ using \eqref{eq:spec_sample_variance_estimate}.
        \STATE Compute $p_k = \arg\max_{\|p\| \leq 1} \langle g_k, p \rangle$.
        \STATE Update $x_{k+1} = x_k - \omega_k \eta_k p_k$.
        \IF{$k \bmod F = 0$}
            \STATE Update $N_k$ and $M_k$ using \eqref{eq:var_ema_method}.
            \IF{$k \geq I$}
                \STATE Update $B_{k+1} = \max\left\{\left\lceil \frac{N_k}{\theta^2 M_k}\right\rceil, B_k\right\}$.
                \STATE Update $\omega_{k + 1} = \omega_k \sqrt{\frac{B_{k+1}}{B_k}}$.
            \ELSE
                \STATE Update $B_{k + 1} = B_k$ and $\omega_{k + 1} = \omega_k$.
            \ENDIF
        \ENDIF
    \ENDFOR
  \end{algorithmic}
\end{algorithm}

\section{Experiments} \label{sec:exps}
\begin{table*}[ht]

\caption{Validation loss and steps reduction (\%) for the 160M Llama 3 model trained for 3.2B tokens (10 seeds) over the C4 dataset. The adaptive batch size methods start from the optimal constant baseline ($B = 256$ for \signsgd and $B=64$ for all others). The steps reduction (\%) represents the median percent reduction in steps to reach the baseline's minimum validation loss by Non-Euclidean GNS.}

\label{tab:Llama160M_summary}

\begin{center}
\begin{small}
\begin{sc}
\begin{tabular}{lccccc}
    \toprule
    \multirow{2}{*}[-0.75ex]{Optimizer} & \multicolumn{4}{c}{validation loss} & \multirow{2}{*}[-0.75ex]{\begin{tabular}[c]{@{}c@{}}Steps \\ Reduction(\%)\end{tabular}} \\ \cmidrule{2-5}
     & B = 64 & B = 256 & Euclidean GNS & Non-Euclidean GNS & \\
    \midrule
    \rowcolor{SignSGDColor}
    \signsgd & $4.1390 \pm 0.0235$ & $3.9357 \pm 0.0057$ & $3.8906 \pm 0.0117$ &$3.8396 \pm 0.0012$ &  22.58 \\ 
    \rowcolor{SignSGDColor}
    \signum & $3.3737 \pm 0.0045$ & $3.4460 \pm 0.0075$ & $3.3842 \pm 0.0055$ & $3.3707 \pm 0.0026$  & 66.61 \\ 
    \rowcolor{SignSGDColor}
    \adamw & $3.2991 \pm 0.0054$ & $3.3228 \pm 0.0101$ & $3.3127 \pm 0.0101$& $3.3031 \pm 0.0054$ &  67.13 \\ 
    \rowcolor{SpecSGDColor}
    \specgd & $3.7489 \pm 0.0096$ & $3.9404 \pm 0.0074$ & $3.7436 \pm 0.0274$ & $3.7285 \pm 0.0094$ &  16.49 \\ 
    \rowcolor{SpecSGDColor}
    \muon & $3.3041 \pm 0.0021$ & $3.3411 \pm 0.0019$ & $3.3181 \pm 0.0072$ & $3.3061 \pm 0.0027$ &  66.77 \\
    \bottomrule
\end{tabular}
\end{sc}
\end{small}
\end{center}
\vskip -0.1in
\end{table*}

To evaluate the empirical effectiveness of our methodology, we conduct language and vision training experiments across a diverse set of optimizers, including \signsgd, \signum, \adamw, \specgd and \muon. We compare our proposed adaptive strategy detailed in \cref{alg:structure} against constant small and large batch size baselines. 
See Appendix~\ref{app:exp_setup} for further details regarding our experimental setup.

\subsection{Language Workloads} \label{subsec:language_models}



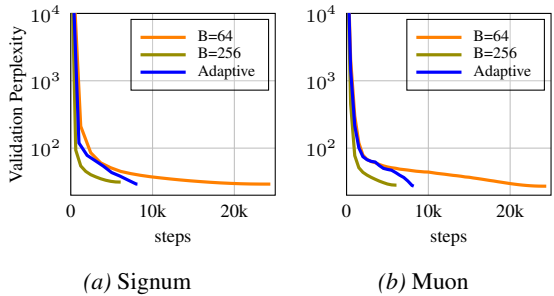
\begin{figure}[h]
    \centering
    \begin{subfigure}{0.45\columnwidth}
        \centering
        \begin{tikzpicture}
            \begin{groupplot}[
                Llama3_perplexity_plots,     
                legend pos=north east,
                legend columns=1,               
                legend style={
                    font=\tiny,           
                    row sep=-3pt,               
                    fill opacity=0.2,           
                    draw opacity=1, 
                    text opacity=1,
                    column sep=2pt,
                    cells={anchor=west}         
                },
                legend entries={B=64, B=256, Adaptive},
            ]
                \nextgroupplot[
                    xlabel={steps},
                    ylabel={Validation Perplexity},
                    xmin=0, xmax=25000,
                    xtick={0, 1e4, 2e4},
                    xticklabels={0, 10k, 20k}, 
                ]
                \addplot table [x=_step, y=perplexity, col sep=comma] {Llama_results/llama_160M_main_csv/signum_64.csv};
                \addplot table [x=_step, y=perplexity, col sep=comma] {Llama_results/llama_160M_main_csv/signum_256.csv};
                \addplot table [x=_step, y=perplexity, col sep=comma] {Llama_results/llama_160M_main_csv/signum_gns.csv};
            \end{groupplot}
        \end{tikzpicture}
        \caption{\signum}
        \label{fig:llama160M_signum_validation_perplexity}
    \end{subfigure}
    \begin{subfigure}{0.45\columnwidth}
        \centering
        \begin{tikzpicture}
            \begin{groupplot}[
                Llama3_perplexity_plots,
                legend pos=north east,
                legend columns=1,               
                legend style={
                    font=\tiny,           
                    row sep=-3pt,               
                    fill opacity=0.2,           
                    draw opacity=1, 
                    text opacity=1,
                    column sep=2pt,
                    cells={anchor=west}         
                },
                legend entries={B=64, B=256, Adaptive},
            ]
                \nextgroupplot[
                    xlabel={steps}, 
                    xmin=0, xmax=25000,
                    xtick={0, 1e4, 2e4},
                    xticklabels={0, 10k, 20k}, 
                ]            
                \addplot table [x=_step, y=perplexity, col sep=comma] {Llama_results/llama_160M_main_csv/MUON_64.csv};
                \addplot table [x=_step, y=perplexity, col sep=comma] {Llama_results/llama_160M_main_csv/MUON_256.csv};
                \addplot table [x=_step, y=perplexity, col sep=comma] {Llama_results/llama_160M_main_csv/MUON_gns.csv};
            \end{groupplot}
        \end{tikzpicture}
        \caption{\muon}
        \label{fig:llama160M_MUON_validation_perplexity}
    \end{subfigure}
    \vspace{0.1cm} \\
    \caption{Comparison of constant batch sizes ($B=64$ and $B=256$) and our adaptive batch size method for the 160M Llama 3 model. For both \signum (Fig \ref{fig:llama160M_signum_validation_perplexity}) and \muon (Fig \ref{fig:llama160M_MUON_validation_perplexity}), our adaptive strategy matches the final perplexity of the smaller batch size baseline while significantly reducing the total steps.}
    \label{fig:llama160M_signum_and_muon_validation_perplexity}
\end{figure}

We train 160M and 1B parameter Llama 3 models \citep{grattafiori2024llama} on the C4 dataset \citep{raffel2020exploring} using Chinchilla-optimal \citep{hoffmann2022training} budgets of 3.2B and 22B tokens, respectively. Both configurations used a 2048 sequence length, trained via DDP on a single node with eight H100 GPUs (97GB memory). Our experimental results for the 160M and 1B models are summarized in \Cref{tab:Llama160M_summary,tab:Llama1B_summary}, respectively. 

As illustrated in \cref{fig:llama160M_signum_and_muon_validation_perplexity}, our proposed adaptive batch size strategy reduces the total steps for \signum and \muon while matching the validation perplexity of the constant small-batch baseline. 
Both \signsgd and \specgd are able to achieve a significantly better loss than their constant batch size baselines.
For the 160M Llama 3 model, the adaptive approach achieves up to a 66.77\% reduction in steps required to reach the baseline minimum validation loss for Muon. 
For the 1B Llama 3 model, \signsgd and \signum similarly demonstrate substantial efficiency gains of 31.84\% and 12.11\%, respectively, while \adamw does not achieve the baseline validation loss under the adaptive strategy. 
We provide additional ablation studies in Appendix \ref{app:languange}.



\begin{table*}[t]
\begin{center}
\caption{Validation loss and steps reduction (\%) for the 1B Llama 3 model trained for 22B tokens (10 seeds) over the C4 dataset. The adaptive batch size method starts at $B = 64$, compared against the $B = 256$ constant-batch baseline. The steps reduction (\%) indicates the percent reduction in steps to reach the baseline's validation loss.}
\label{tab:Llama1B_summary}
\small 
\begin{sc}
\begin{tabular}{lccc}
\toprule
\multirow{2}{*}[-0.75ex]{Optimizer} & \multicolumn{2}{c}{Validation Loss} & \multirow{2}{*}[-0.75ex]{\begin{tabular}[c]{@{}c@{}}Steps \\ Reduction (\%)\end{tabular}} \\ \cmidrule{2-3}
 & B = 256 & Non-Euclidean GNS &  \\
\midrule
\rowcolor{SignSGDColor}
\signsgd & $3.1417 \pm 0.0047$ & $2.9946 \pm 0.0201$ & 31.84 \\ 
\rowcolor{SignSGDColor}
\signum & $2.8306 \pm 0.0051$ & $2.8354 \pm 0.0073$ & 12.11 \\ 
\rowcolor{SignSGDColor}
\adamw & $2.7701 \pm 0.0037$ & $2.8023 \pm 0.0049$ & - \\
\bottomrule
\end{tabular}
\end{sc}
\end{center}
\end{table*}


\subsection{Vision Workloads}  \label{subsec:vision_models}

\begin{table*}[tb]
\begin{center}
\caption{Validation loss and steps reduction (\%) for the SimpleViT model trained (5 seeds) over ImageWoof dataset. The adaptive batch method starts at $B = 128$, compared against the $B = 128$ constant-batch baseline. The steps reduction (\%) indicates the percent reduction in steps to reach the baseline's minimum validation loss. The $B = 512$ column is reported for reference. The Adaptive column uses the GNS measured in the dual norm matched to each optimizer's geometry ($\ell_2$ for \sgd/\msgd/\nsgd, $\ell_1$ for \signsgd/\signum/\adamw, and $\Scal_1$ for \muon).}

\label{tab:Imagewoof_summary}
\begin{small}
\begin{sc}
\begin{tabular}{lcccc}
\toprule
\multirow{2}{*}[-0.75ex]{Optimizer} & \multicolumn{3}{c}{validation loss} & \multirow{2}{*}[-0.75ex]{\begin{tabular}[c]{@{}c@{}}Steps \\ Reduction(\%)\end{tabular}} \\ \cmidrule{2-4}
 & B = 128 & B = 512 & Adaptive &  \\
\midrule
\rowcolor{SGDColor}
\msgd & $1.1966 \pm 0.0129$ & $1.4539 \pm 0.0118$ & $1.1849 \pm 0.0044$  &  4.14
\\ 
\rowcolor{SGDColor}
\sgd & $1.4829 \pm 0.0187$ & $1.6473 \pm 0.0076$ & $1.3179 \pm0.0239$ & 40.29
\\ 
\rowcolor{SGDColor}
\nsgd & $1.1979 \pm 0.0087$ & $1.5610 \pm 0.0065$ & $1.3286 \pm 0.0028$ & -
\\ 
\rowcolor{SignSGDColor}
\signsgd & $1.2003 \pm 0.0087$ & $1.5153 \pm 0.0080$ & $1.1927 \pm0.0064$ & 27.44 \\ 
\rowcolor{SignSGDColor}
\signum & $1.2987 \pm 0.0217$ & $1.5109 \pm 0.0049$ & $1.2322 \pm 0.0029$ & 11.30  \\ 
\rowcolor{SignSGDColor}
\adamw & $0.9750 \pm 0.0151$ & $1.0653 \pm 0.0051$ & $0.9338 \pm 0.0030$ &  10.86 \\ 

\rowcolor{SpecSGDColor}
\muon & $0.6578 \pm0.0218$ & $0.7156 \pm 0.0060$ & $0.6433 \pm0.0057$ &  57.65\\
\bottomrule
\end{tabular}
\end{sc}
\end{small}
\end{center}
\end{table*}

We train a SimpleViT \citep{beyer2022better} on the Imagewoof \citep{Howard_Imagewoof_2019} dataset.
For each evaluated optimizer, we compare a constant batch ($B=128$) baseline against our proposed adaptive batch size strategy, reporting performance from the best configuration found from our hyperparameter sweep.
We also conduct evaluations on other architectures and datasets; see Appendix~\ref{app:exp_setup_vision} and \ref{app:vision}.

As shown in Table \ref{tab:Imagewoof_summary}, for most optimizers, adaptive batching achieves comparable or lower validation loss while reducing the number of optimizer steps.

Overall, our results indicate that using $\ell_1$ and $\Scal_1$ GNS effectively reduces the number of steps on vision tasks without compromising validation performance.

\section{Conclusion} \label{sec:conc}

In this work, we established the $\ell_1$ and $\Scal_1$ GNS as rigorous frameworks for adaptively selecting batch sizes for \signsgd and \specgd.
While this provides a robust foundation for generalized steepest descent, extending these principles to preconditioned and stateful optimizers remains an open challenge. 
For instance, how do we handle exponential moving averages of the gradients or second moments (e.g., in \adamw or \muon)? 
How do we unify different noise scales when different modules utilize different optimizers? 
Additionally, how do we account for more general Hessian structures in our GNS metrics?

Beyond batch size selection, the geometry-aware GNS can serve as a fundamental diagnostic tool for modern training stacks. 
By quantifying noise relative to the optimizer's update direction, this metric offers a precise lens for monitoring training stability, detecting distribution shifts, and guiding decisions in quantization or sparsity. 
We hope these signals motivate the development of next-generation training algorithms that explicitly leverage variance metrics and covariance adaptation.






\newpage

\section*{Impact Statement} \label{sec:impact_statement}
This paper presents work whose goal is to advance the field of machine learning. There are many potential societal consequences of our work, none of which we feel must be specifically highlighted here.

\section*{Acknowledgements}
We thank Raghu Bollapragada, Anna Cai, Jiaming Cui, Aaron Defazio, Runa Eschenhagen, Wei Feng, Chien-Chin Huang, Tsung-Hsien Lee, Gavin Zhang, and Iris Zhang for their discussions on the algorithm, support in experimentation, and detailed feedback on the manuscript. We also thank Adnan Aziz, Maxim Naumov, Sandeep Parab, Joel Pobar, and Chunqiang Tang for their managerial support of this work.

This work was supported by RBC Borealis through the RBC Borealis AI Global Fellowship Award, which was awarded to Hiroki Naganuma.
This research was supported by grants from NVIDIA and utilized NVIDIA A100 GPUs provided through the NVIDIA Academic Grant Program.  
The authors also acknowledge the support of Compute Canada and Mila computing clusters for experimental resources.


\bibliography{example_paper}
\bibliographystyle{icml2026}

\newpage
\appendix
\onecolumn

\section{Theoretical Results} 
\label{sec:theory}

This section provides formal derivations for the theoretical bounds introduced in Section 
\hn{\ref{sec:alg}}
and details the convergence analyses for \signsgd and \specgd under the adaptive batch size strategy.

\begin{lemma} 
    \label[lemma]{lem:app_general_norm_inner_product}
    Let $g_k$ be a gradient estimate such that $\Embb_k\left[g_k\right] = \nabla \Lcal(x_k)$. Let $\|\cdot\|$ be a general norm and $\|\cdot\|_*$ be its associated dual norm. The stochastic steepest descent direction $p_k = \arg\max_{\|p\| \leq 1} \langle g_k, p\rangle$ satisfies:
    \begin{equation*}
        \Embb_k \left[\langle \nabla \Lcal (x_k), p_k \rangle \right] \geq \left\|\nabla \Lcal(x_k)\right\|_* - \Embb_k \left[\left\|\nabla \Lcal (x_k) - g_k\right\|_* \right].
    \end{equation*}
\end{lemma}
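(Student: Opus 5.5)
The plan is a pointwise (sample-by-sample) inequality, after which we take the conditional expectation $\Embb_k$. Unbiasedness of $g_k$ is not needed; the argument only uses the definition of $p_k$ and of the dual norm. Fix a realization of the mini-batch and write $\nabla \Lcal(x_k) = g_k + (\nabla \Lcal(x_k) - g_k)$. By bilinearity,
\begin{equation*}
\langle \nabla \Lcal(x_k), p_k\rangle = \langle g_k, p_k\rangle + \langle \nabla \Lcal(x_k) - g_k, p_k\rangle .
\end{equation*}

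We treat the two terms separately.

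For the first term, $p_k$ attains the maximum defining the dual norm, so $\langle g_k, p_k\rangle = \max_{\|p\|\le 1}\langle g_k,p\rangle = \|g_k\|_*$. For the second term, the generalized Cauchy--Schwarz (Hölder) inequality $|\langle u,p\rangle| \le \|u\|_*\|p\|$ together with $\|p_k\|\le 1$ gives $\langle \nabla \Lcal(x_k) - g_k, p_k\rangle \ge -\|\nabla \Lcal(x_k) - g_k\|_*$. Hence, pointwise,
\begin{equation*}
\langle \nabla \Lcal(x_k), p_k\rangle \ge \|g_k\|_* - \|\nabla \Lcal(x_k) - g_k\|_*.
\end{equation*}

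Next, the reverse triangle inequality for the dual norm gives $\|g_k\|_* \ge \|\nabla \Lcal(x_k)\|_* - \|\nabla \Lcal(x_k) - g_k\|_*$. This would produce a factor of $2$ in front of the error term, so it is the step to handle with care. The fix is to bound $\langle \nabla\Lcal(x_k),p_k\rangle$ by comparison with the deterministic maximizer instead. Let $p^\star=\arg\max_{\|p\|\le1}\langle \nabla \Lcal(x_k),p\rangle$, so that $\langle \nabla\Lcal(x_k),p^\star\rangle=\|\nabla\Lcal(x_k)\|_*$. Optimality of $p_k$ for $g_k$ gives $\langle g_k,p_k\rangle\ge\langle g_k,p^\star\rangle$. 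Then
\begin{equation*}
\langle \nabla\Lcal(x_k),p_k\rangle \ge \langle g_k,p^\star\rangle + \langle \nabla\Lcal(x_k)-g_k,p_k\rangle = \|\nabla\Lcal(x_k)\|_* - \langle \nabla\Lcal(x_k)-g_k,\,p^\star - p_k\rangle .
\end{equation*}
Since $\|p^\star - p_k\|\le 2$, this still yields $2\|\nabla\Lcal(x_k)-g_k\|_*$ pointwise. So the one-factor bound must use the expectation.

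Here is where unbiasedness enters. Because $p^\star$ is deterministic given $x_k$, $\Embb_k[\langle \nabla\Lcal(x_k)-g_k,p^\star\rangle]=0$. Taking $\Embb_k$ of the last display therefore leaves only $\|\nabla\Lcal(x_k)\|_* + \Embb_k[\langle \nabla\Lcal(x_k)-g_k,p_k\rangle]$. Applying Hölder with $\|p_k\|\le1$ to the remaining term gives
\begin{equation*}
\Embb_k[\langle \nabla\Lcal(x_k),p_k\rangle] \ge \|\nabla\Lcal(x_k)\|_* - \Embb_k[\|\nabla\Lcal(x_k)-g_k\|_*],
\end{equation*}
which is the claim.

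The main obstacle is avoiding the factor $2$ that the naive triangle-inequality route produces. The resolution is to compare against the fixed maximizer $p^\star$ and let its cross term vanish in expectation, and this is exactly why the appendix version of the statement adds the hypothesis $\Embb_k[g_k]=\nabla\Lcal(x_k)$. Measurability and integrability of $\|g_k\|_*$ should be assumed; if the argmax is not unique, any measurable selection of $p_k$ works.
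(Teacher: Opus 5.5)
Your proof is correct. It differs from the paper's only in how the term $\|g_k\|_*$ is handled.

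The paper stops at your first pointwise display, $\langle \nabla\Lcal(x_k), p_k\rangle \ge \|g_k\|_* - \|\nabla\Lcal(x_k) - g_k\|_*$, and takes $\Embb_k$. It then applies Jensen's inequality to the convex function $\|\cdot\|_*$, namely $\Embb_k[\|g_k\|_*] \ge \|\Embb_k[g_k]\|_* = \|\nabla\Lcal(x_k)\|_*$, and that is where unbiasedness enters.

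Your comparison with the fixed maximizer $p^\star$ is exactly the standard proof of that Jensen step. Indeed, $\|g_k\|_* = \langle g_k, p_k\rangle \ge \langle g_k, p^\star\rangle$, and the right side has conditional mean $\|\nabla\Lcal(x_k)\|_*$. So the two arguments coincide in substance; yours uses linearity of expectation instead of citing Jensen.

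The ``factor of $2$'' obstacle you describe comes only from insisting on a \emph{pointwise} lower bound for $\|g_k\|_*$. Bounding it in expectation closes the proof directly from your first display, so the detour through $p^\star - p_k$ is unnecessary, though harmless.

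Do correct your opening sentence: unbiasedness \emph{is} needed, as your own later argument shows. The lemma fails without it. For example, take deterministic $g_k = -\epsilon\nabla\Lcal(x_k)$ with $0<\epsilon<1$ and $\nabla\Lcal(x_k)\neq 0$. Then the left side is $-\|\nabla\Lcal(x_k)\|_*$, but the right side is $-\epsilon\|\nabla\Lcal(x_k)\|_*$.
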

\begin{proof}
    Let $\ecal_k = g_k - \nabla \Lcal(x_k) $ be the gradient estimation error at iteration $k$. The inner product $\langle \nabla \Lcal(x_k), p_k \rangle$ can be lower bounded as:
    \begin{align}
        \langle \nabla \Lcal(x_k), p_k \rangle 
        & = \langle g_k - \ecal_k, p_k \rangle \nonumber \\
        & = \langle g_k, p_k \rangle - \langle \ecal_k, p_k \rangle \nonumber \\
        & = \|g_k\|_* - \langle \ecal_k, p_k \rangle \nonumber \\
        & \geq \|g_k\|_* - \|\ecal_k\|_* \label{eq:inner_prod_lower_bound}
    \end{align}
    where the third equality follows from the definition of the dual norm (since $p_k$ is chosen to maximize the inner product with $g_k$), and the final bound follows from H\"{o}lder's Inequality:
    \begin{equation*}
        \langle \ecal_k, p_k \rangle \leq \|\ecal_k\|_* \| p_k \| = \|\ecal_k\|_*
    \end{equation*}
    (noting that $\|p_k\| \leq 1$).

    Taking the expectation of the inequality \eqref{eq:inner_prod_lower_bound} 
    given $x_k$, we apply Jensen's inequality ($\Embb_k\left[\|g_k\|_* \right] \geq \left\|\Embb_k\left[g_k\right]\right\|_* = \left\|\nabla \Lcal(x_k)\right\|_*$) to the first term to complete the proof. 
    
\end{proof}

\begin{lemma} \label[lemma]{lem:app_signSGD_error_bound}
Let $g_k = \tfrac{1}{B_k}\sum_{\xi \in \Scal_k} \nabla \lcal(x_k; \xi) \in \Rmbb^d$ be a gradient estimate computed over a mini-batch $\Scal_k$ of $B_k$ i.i.d samples drawn from $\Dcal$ at iteration $k$. Then the expected $\ell_1$ norm of the estimation error can be bounded as:
\begin{equation*}
    \Embb_k\left[\|\nabla \Lcal(x_k) - g_k\|_1 \right] \leq \frac{\|\sigma_k\|_1}{\sqrt{B_k}},
\end{equation*}
where $\sigma_k$ is the component-wise standard deviation for the single sample gradient $\nabla \ell(x_k; \xi)$ given $x_k$, i.e., for each component $i = 1, ..., d$:
\begin{equation*}
    [\sigma_k]_i^2 = \Embb \left[([\nabla \ell(x_k; \xi) - \nabla \Lcal(x_k)]_i)^2 \right].
\end{equation*}
\end{lemma}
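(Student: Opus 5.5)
The plan is to decompose the $\ell_1$ norm coordinate-wise, reduce each coordinate to a scalar mean-deviation bound via Jensen's inequality, and then use the variance of an average of i.i.d. terms.

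First, write $\|\nabla \Lcal(x_k) - g_k\|_1 = \sum_{i=1}^d |[g_k - \nabla\Lcal(x_k)]_i|$. By linearity of the conditional expectation,
\begin{equation*}
\Embb_k\left[\|\nabla \Lcal(x_k) - g_k\|_1\right] = \sum_{i=1}^d \Embb_k\left[\,|[g_k - \nabla\Lcal(x_k)]_i|\,\right].
\end{equation*}
This reduces the claim to a scalar bound for each coordinate.

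Next, fix a coordinate $i$ and apply Jensen's inequality, using the concavity of the square root:
\begin{equation*}
\Embb_k\left[|[g_k - \nabla\Lcal(x_k)]_i|\right] \leq \sqrt{\Embb_k\left[([g_k - \nabla\Lcal(x_k)]_i)^2\right]}.
\end{equation*}
Equivalently, this is Cauchy--Schwarz against the constant $1$.

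Then compute the second moment on the right-hand side. We have $[g_k - \nabla\Lcal(x_k)]_i = \frac{1}{B_k}\sum_{\xi\in\Scal_k} [\nabla\ell(x_k;\xi) - \nabla\Lcal(x_k)]_i$, and the summands are i.i.d. given $x_k$ with zero mean, since $\Embb[\nabla \ell(x_k;\xi)] = \nabla\Lcal(x_k)$. Expanding the square, the cross terms vanish by independence and zero mean. What remains is $\frac{1}{B_k^2}\cdot B_k [\sigma_k]_i^2 = [\sigma_k]_i^2/B_k$.

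Taking square roots gives $[\sigma_k]_i/\sqrt{B_k}$ for each coordinate, and summing over $i$ yields $\|\sigma_k\|_1/\sqrt{B_k}$, as claimed.

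The main point requiring care is the cross-term cancellation. It needs two things:
\begin{itemize}
\item the samples must be independent and drawn independently of $x_k$, so that conditioning on $x_k$ preserves i.i.d.\ structure and unbiasedness;
\item the gradient must have finite variance, with interchange of gradient and expectation.
\end{itemize}
I would state both as standing assumptions implicit in the definition of $\sigma_k$. All other steps are routine.
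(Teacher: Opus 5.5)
Your proposal is correct and follows the paper's proof essentially step for step. Both split the $\ell_1$ norm coordinate-wise, apply Jensen's inequality to each coordinate, and use that the variance of an average of $B_k$ i.i.d.\ zero-mean terms is $[\sigma_k]_i^2/B_k$; your explicit cross-term cancellation and standing assumptions (samples independent of $x_k$, unbiasedness, finite variance) spell out what the paper's one-line appeal to i.i.d.\ sampling leaves implicit.
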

\begin{proof}
Let $[x]_i$ denote the $i$-th element of the vector $x \in \Rmbb^d$. Then:
\begin{align*}
    \Embb_k\left[\|\nabla \Lcal(x_k) - g_k\|_1\right]
    &= \sum_{i=1}^{d} \Embb_k\left[\left|\left[\nabla \Lcal(x_k) - g_k\right]_i \right|\right] \\
    &\leq \sum_{i=1}^{d} \sqrt{\Embb_k\left[\left(\left[\nabla \Lcal(x_k) - g_k\right]_i \right)^2 \right] },
\end{align*}
where the final inequality follows from Jensen's inequality.
Since $g_k$ is the average of $B_k$ i.i.d. samples,  each with variance $\left[\sigma_k\right]_i^2$, we obtain:
\begin{align*}
    \Embb_k\left[\|\nabla \Lcal(x_k) - g_k\|_1 \right]
    &\leq \sum_{i=1}^{d} \sqrt{\frac{\left[\sigma_k\right]_i^2}{B_k} } = \frac{\|\sigma_k\|_1}{\sqrt{B_k}},
\end{align*}
completing the proof.

\end{proof}

\begin{lemma} \label[lemma]{lem:app_specGD_error_bound}
Let $G_k = \tfrac{1}{B_k}\sum_{\xi \in \Scal_k} \nabla \lcal(X_k; \xi)$ be a gradient estimate computed over a mini-batch $\Scal_k$ of $B_k$ i.i.d samples drawn from $\Dcal$ at iteration $k$. Then the expected nuclear norm of the estimation error can be bounded as:
\begin{equation*}
    \Embb_k\left[\|\nabla \Lcal(X_k) - G_k\|_{\Scal_1} \right] 
    \leq \frac{\|C_{\text{row}, k}^{1/2}\|_{\Scal_1}}{\sqrt{B_k}},
\end{equation*}
where $C_{\text{row}, k} = \Embb_k\left[ \left(\nabla \Lcal(X_k) - \nabla \lcal(X_k; \xi)\right)\left(\nabla \Lcal(X_k) - \nabla \lcal(X_k; \xi)\right)^\top \right]$ is the row-wise aggregated covariance matrix.
\end{lemma}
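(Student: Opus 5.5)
The plan is to mirror \Cref{lem:app_signSGD_error_bound}, replacing the coordinatewise Jensen step by a matrix version built on the variational characterization of the nuclear norm. Write $E_k = G_k - \nabla \Lcal(X_k) = \tfrac{1}{B_k}\sum_{\xi\in\Scal_k}\big(\nabla\lcal(X_k;\xi) - \nabla\Lcal(X_k)\big)$. This is a sum of $B_k$ i.i.d. zero-mean matrices, so the cross terms vanish and
\[
\Embb_k[E_k E_k^\top] = \tfrac{1}{B_k} C_{\text{row},k}.
\]
The goal is therefore to show the matrix analogue of $\Embb|Z| \le \sqrt{\Embb Z^2}$, namely
\[
\Embb_k\big[\|E\|_{\Scal_1}\big] \le \big\|(\Embb_k[EE^\top])^{1/2}\big\|_{\Scal_1}
\]
for any random $E$. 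Applying this with $E = E_k$ and using $\|(C/B)^{1/2}\|_{\Scal_1} = \|C^{1/2}\|_{\Scal_1}/\sqrt{B}$ gives the lemma.

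The key step is the matrix Jensen inequality. Note that $\|E\|_{\Scal_1} = \tr\big((EE^\top)^{1/2}\big)$. I will use the fact that $A \mapsto \tr(A^{1/2})$ is concave on $\Smbb^m_+$. This follows because $t\mapsto t^{1/2}$ is operator concave (Löwner–Heinz), and the trace of an operator-concave function is concave; alternatively, it is a standard consequence of concavity of $\sum_i f(\lambda_i(A))$ for concave $f$. Jensen's inequality then gives
\[
\Embb\big[\tr\big((EE^\top)^{1/2}\big)\big] \le \tr\big((\Embb[EE^\top])^{1/2}\big).
\]

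If I want to avoid citing operator concavity, I have a more elementary route. For any positive definite $M$, apply Cauchy–Schwarz to $E = M^{1/4}\cdot M^{-1/4}E$ together with the duality $\|E\|_{\Scal_1} = \max_{\|Q\|_{\Scal_\infty}\le 1}\tr(Q^\top E)$. This yields
\[
\|E\|_{\Scal_1} \le \tfrac12\big(\tr(M^{1/2}) + \tr(M^{-1/2}EE^\top)\big),
\]
which is the AM–GM form of $\tr(A^{1/2}) = \min_{M\succ 0}\tfrac12\big(\tr(M^{1/2}) + \tr(M^{-1/2}A)\big)$. Taking expectations and then choosing $M = \Embb[EE^\top]$ (regularized by $\epsilon I$ if it is singular, then letting $\epsilon\to 0$) gives the bound directly. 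Establishing this variational inequality is the step I expect to be the main obstacle. I would prove it via von Neumann's trace inequality, or by writing $\tr(M^{-1/2}A) - 2\tr(A^{1/2}) + \tr(M^{1/2})$ as $\|M^{-1/4}A^{1/2}U - M^{1/4}\|_F^2$ for a suitable unitary $U$ from the polar decomposition of $E$.

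The remaining steps are routine: the covariance scaling by $1/B_k$, which relies only on independence and zero mean, and the homogeneity $\tr\big((C/B)^{1/2}\big) = B^{-1/2}\tr(C^{1/2})$.
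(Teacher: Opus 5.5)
Your proposal is correct and follows essentially the same route as the paper's proof. You write $\|E_k\|_{\Scal_1} = \tr\bigl((E_kE_k^\top)^{1/2}\bigr)$, apply Jensen's inequality via concavity of the (trace of the) matrix square root on positive semidefinite matrices, and use $\Embb_k[E_kE_k^\top] = C_{\text{row},k}/B_k$; you justify that last identity explicitly from independence and zero mean, whereas the paper uses it without comment, and your variational Cauchy--Schwarz argument is a valid but unnecessary self-contained substitute for citing concavity.
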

\begin{proof}
Let $\Ecal_k = \nabla \Lcal(X_k) - G_k \in \Rmbb^{m \times n}$ be the gradient estimate error. Then:
\begin{equation*}
    \Embb_k\left[\|\mathcal{E}_k\|_{\Scal_1} \right] 
    = \Embb_k\left[\tr\left((\Ecal_k\Ecal_k^\top)^{1 / 2}\right) \right]
    \leq \tr\left(\Embb_k\left[\Ecal_k\Ecal_k^\top \right]^{1 / 2}\right) = \frac{\tr\left(C_{\text{row}, k}^{1/2}\right)}{\sqrt{B_k}},
\end{equation*}
where the inequality follows from Jensen’s inequality together with the concavity of the matrix square root over positive semi-definite matrices.

\end{proof}

\begin{theorem} \label{th:convergence_sign_CBS} 
Assume the objective function $\Lcal(x) = \Embb \left[\lcal(x; \xi)\right] : \Rmbb^d \rightarrow \Rmbb$ is $\ell_{\infty}$ smooth, i.e.,
\begin{equation*}
    \left\|\nabla \Lcal(x) - \nabla \Lcal(x')\right\|_{1} \leq L_{\infty} \|x - x'\|_{\infty} \quad \forall x, x' \in \Rmbb^d,
\end{equation*}
with $L_{\infty}>0$ and bounded below with optimal value $\Lcal^*$. Let $\{x_k\}$ be the sequence generated by the update rule $x_{k+1} = x_k - \eta_k \sign(g_k)$ for $k \geq 0$, where $g_k = \tfrac{1}{B_k}\sum_{\xi \in \Scal_k} \nabla \lcal(x_k; \xi)$ is a gradient estimate computed over a mini-batch of $B_k = \frac{\|\sigma_k\|_1^2}{\theta^2\|\nabla \Lcal (x_k)\|_1^2}$ i.i.d samples with $\theta \in \left(0, 1\right)$ and $\sigma_k$ the component-wise standard deviation for the single sample gradient $\nabla \ell(x_k; \xi)$ given $x_k$.
\begin{enumerate}
    \item If the learning rate is chosen as $\eta_k = \eta = \frac{1}{\sqrt{L_{\infty} K}}$ for $K \geq 1$,
    \begin{align}
        \Embb\left[ \frac{1}{K}\sum_{k=0}^{K-1} \left\|\nabla\Lcal(x_{k})\right\|_1\right]\leq \frac{\sqrt{L_{\infty}}}{(1 -  \theta)\sqrt{K}}\left[\Lcal(x_0) - \Lcal^* + \frac{1}{2}\right].
        \label{eq:app_signsgd_proof_result_1}
    \end{align}

    \item If the learning rate is chosen such that $\sum_{k=0}^\infty \eta_k = \infty$ and $\sum_{k=0}^\infty \eta_k^2 < \infty$,
    \begin{align}
        \liminf_{k \rightarrow \infty} ~ \Embb[\|\nabla \Lcal(x_k)\|_1] = 0. \label{eq:app_signsgd_proof_result_2}
    \end{align}

    \item If the learning rate is chosen as $\eta_k = (1 - \theta)\frac{\|\nabla \Lcal (x_k)\|_1}{L_\infty}$, then for any for $K \geq 1$,
    \begin{align}
        \Embb\left[ \frac{1}{K}\sum_{k=0}^{K-1} \left\|\nabla\Lcal(x_{k})\right\|_1^2\right]\leq \frac{2L_{\infty}}{(1 -  \theta)^2K}(\Lcal(x_0) - \Lcal^*). \label{eq:app_signsgd_proof_result_gns_lr}
    \end{align}
    
    If the objective function is also $\mu_\infty-$strongly convex, i.e.,
    \begin{align*}
        2 \mu_\infty \left(\Lcal (x)  - \Lcal^*\right)\leq \left\|\nabla \Lcal (x)\right\|_1^2 \quad \forall x \in \Rmbb^d,
    \end{align*}
    with $\mu_\infty > 0$, then
    \begin{align}
        \Embb\left[\Lcal(x_{k+1})\right] - \Lcal^*
        \leq  
        \left[1 - \left(1 - \theta\right)^2\frac{\mu_\infty}{L_\infty}\right]\left(\Embb\left[\Lcal(x_k)\right] - \Lcal^*\right) \label{eq:app_signsgd_proof_result_3}
    \end{align}
    and $\{x_k\}$ converges to a unique optimal point in expectation at a linear rate.
\end{enumerate}
\end{theorem}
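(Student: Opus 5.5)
The approach is a one-step descent inequality under $\ell_\infty$ smoothness, combined with \Cref{lem:app_general_norm_inner_product} and \Cref{lem:app_signSGD_error_bound}. The batch-size rule $B_k=\theta^{-2}\Bcal_{\ell_1}(x_k)$ then turns the bias term into a multiplicative $(1-\theta)$ factor, and the three claims follow from standard telescoping and contraction arguments.

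\textbf{Step 1 (one-step inequality).} $\ell_\infty$ smoothness with dual norm $\ell_1$ gives the generalized descent lemma
\begin{equation*}
\Lcal(x_{k+1})\le \Lcal(x_k)-\eta_k\langle\nabla\Lcal(x_k),\sign(g_k)\rangle+\tfrac{L_\infty}{2}\eta_k^2\|\sign(g_k)\|_\infty^2 .
\end{equation*}
We have $\|\sign(g_k)\|_\infty\le 1$. In the settings of items 1 and 2, $\eta_k$ is deterministic. In item 3, $\eta_k$ is a function of $x_k$ and hence measurable given $x_k$. Likewise $B_k$ is determined by $x_k$. In all cases we can take $\Embb_k$ and apply \Cref{lem:app_general_norm_inner_product} with the $\ell_\infty/\ell_1$ pair and then \Cref{lem:app_signSGD_error_bound}:
\begin{equation*}
\Embb_k\langle\nabla\Lcal(x_k),\sign(g_k)\rangle\ge \|\nabla\Lcal(x_k)\|_1-\frac{\|\sigma_k\|_1}{\sqrt{B_k}}=(1-\theta)\|\nabla\Lcal(x_k)\|_1 .
\end{equation*}
The equality holds because $\sqrt{B_k}=\|\sigma_k\|_1/(\theta\|\nabla\Lcal(x_k)\|_1)$. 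The degenerate case $\nabla\Lcal(x_k)=0$ is trivial. This gives the key recursion
\begin{equation*}
\Embb_k[\Lcal(x_{k+1})]\le\Lcal(x_k)-(1-\theta)\eta_k\|\nabla\Lcal(x_k)\|_1+\tfrac{L_\infty}{2}\eta_k^2 .
\end{equation*}

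\textbf{Step 2 (items 1 and 2).} Take total expectations, sum over $k=0,\dots,K-1$, telescope, and use $\Lcal\ge\Lcal^*$. This yields
\begin{equation*}
(1-\theta)\sum_k\eta_k\Embb\|\nabla\Lcal(x_k)\|_1\le \Lcal(x_0)-\Lcal^*+\tfrac{L_\infty}{2}\sum_k\eta_k^2 .
\end{equation*}
For item 1, set $\eta=1/\sqrt{L_\infty K}$, so that $\sum_k\eta^2=1/L_\infty$. Dividing by $(1-\theta)K\eta=(1-\theta)\sqrt{K/L_\infty}$ gives exactly \eqref{eq:app_signsgd_proof_result_1}. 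For item 2, let $K\to\infty$; the right-hand side stays finite because $\sum\eta_k^2<\infty$. If $\liminf_k\Embb\|\nabla\Lcal(x_k)\|_1=\epsilon>0$, then $\sum_k\eta_k\Embb\|\nabla\Lcal(x_k)\|_1=\infty$ since $\sum\eta_k=\infty$, which is a contradiction.

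\textbf{Step 3 (item 3).} Substitute $\eta_k=(1-\theta)\|\nabla\Lcal(x_k)\|_1/L_\infty$ into the recursion. The two gradient terms combine into
\begin{equation*}
-\frac{(1-\theta)^2}{L_\infty}\|\nabla\Lcal(x_k)\|_1^2+\frac{(1-\theta)^2}{2L_\infty}\|\nabla\Lcal(x_k)\|_1^2=-\frac{(1-\theta)^2}{2L_\infty}\|\nabla\Lcal(x_k)\|_1^2 .
\end{equation*}
Telescoping as before gives \eqref{eq:app_signsgd_proof_result_gns_lr}. Under the $\mu_\infty$ condition, bound $\|\nabla\Lcal(x_k)\|_1^2\ge2\mu_\infty(\Lcal(x_k)-\Lcal^*)$. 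Subtract $\Lcal^*$ and take total expectations to obtain the contraction \eqref{eq:app_signsgd_proof_result_3}.

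To conclude linear convergence we need the contraction factor to lie in $[0,1)$. This follows from $\mu_\infty\le L_\infty$, which is a consequence of combining smoothness with the gradient-domination inequality. Iterating the contraction gives $\Embb[\Lcal(x_k)]-\Lcal^*\to0$ geometrically. Strong convexity then gives uniqueness of the minimizer, and the usual quadratic-growth argument converts function-value convergence into iterate convergence in expectation.

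\textbf{Main obstacle.} The delicate step is Step 1: justifying that $B_k$ and, in item 3, $\eta_k$ may be treated as constants under $\Embb_k$. Both are functions of $x_k$ alone, so this conditioning is legitimate. The rest of Step 1 is just checking that the choice of $B_k$ makes the bias bound collapse exactly to $(1-\theta)\|\nabla\Lcal(x_k)\|_1$.
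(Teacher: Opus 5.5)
Your proposal is correct and follows the same route as the paper. Both use the $\ell_\infty$-smoothness descent lemma, then \Cref{lem:app_general_norm_inner_product} and \Cref{lem:app_signSGD_error_bound}, with the batch-size rule collapsing the bias to $\theta\|\nabla\Lcal(x_k)\|_1$; items 1--2 then follow by telescoping and item 3 by the gradient-domination contraction. Your telescoping is also cleaner than the paper's intermediate line, which attaches the factor $(1-\theta)$ to the $\eta_k^2$ term before reaching the same bound.

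Your extra checks go beyond what the paper writes: that $\mu_\infty \le L_\infty$, so the contraction factor lies in $[0,1)$, and that $B_k,\eta_k$ are measurable given $x_k$. One caveat: your uniqueness and iterate-convergence remark needs genuine strong convexity. The displayed gradient-domination inequality alone does not force a unique minimizer.
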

\begin{proof}
    Under the stated smoothness assumption, the change in the loss function in iteration $k$ can be bounded using Proposition 2 from \citet{balles2020geometry} as
    \begin{align*}
        \Lcal(x_{k+1}) - \Lcal(x_k)
        &\leq - \eta_k \nabla \Lcal(x_k)^T \sign(g_k) + \frac{L_{\infty}}{2}\|\eta_k\sign(g_k)\|_{\infty}^2 \leq - \eta_k \nabla \Lcal(x_k)^T \sign(g_k) + \eta_k^2 \frac{L_{\infty}}{2}.
    \end{align*}
    where the last inequality follows from $\|\sign(g_k)\|_{\infty} \leq 1$. Taking expectation of the above bound given $x_k$ and using \cref{lem:app_general_norm_inner_product} and \cref{lem:app_signSGD_error_bound},

    \begin{align}
        \Embb_k\left[\Lcal(x_{k+1}) \right] - \Lcal(x_k)
        &\leq - \eta_k\left\|\nabla\Lcal(x_{k})\right\|_1 + \frac{\eta_k\|\sigma_k\|_1}{\sqrt{B_k}} + \eta_k^2 \frac{L_{\infty}}{2} 
        = - \eta_k(1 - \theta)\left\|\nabla\Lcal(x_{k})\right\|_1  + \eta_k^2 \frac{L_{\infty}}{2},  \label{eq:signSGD_pre_telescope}
    \end{align}
    where the equality follows from the defined batch size.

    Taking the total expectation of \Cref{eq:signSGD_pre_telescope} with respect to the entire trajectory of updates, we obtain a telescopic sum for $k = 0, 1, 2, \dots, K-1$, 
    \begin{align*}
        \Lcal(x_0) - \Lcal^*
        &\geq \Lcal(x_0) - \Embb\left[\Lcal(x_K)\right] \nonumber \\
        &= \sum_{k=0}^{K-1} \Embb\left[\Lcal(x_{k}) - \Lcal(x_{k+1})\right] \nonumber \\
        &\geq (1 - \theta) \left[ \sum_{k=0}^{K-1} \eta_k \left\|\nabla\Lcal(x_{k})\right\|_1 - \frac{L_\infty}{2}\sum_{k=0}^{K-1} \eta_k^2\right],
    \end{align*}
    where rearranging the in above using $1 - \theta > 0$ yields,
    \begin{align}
        \Embb\left[ \sum_{k=0}^{K-1} \eta_k \left\|\nabla\Lcal(x_{k})\right\|_1\right]
        \leq
        \frac{1}{(1-\theta)} \left[\Lcal(x_0) - \Lcal^* + \frac{L_\infty}{2}\sum_{k=0}^{K-1} \eta_k^2\right].
        \label{eq:app_signsgd_telescope}
    \end{align}
    Substituting the learning rate as $\eta_k = (1 - \theta) \frac{\|\nabla \Lcal_\infty\|_1}{L_\infty}$
    Substituting the learning rate as $\eta_k = \eta = \frac{1}{\sqrt{L_{\infty} K}}$ in \Cref{eq:app_signsgd_telescope} and multiplying by $\frac{1}{\sqrt{K}}$ yields \Cref{eq:app_signsgd_proof_result_1}.

    If the learning rate satisfies $\sum_{k=0}^\infty \eta_k = \infty$ and $\sum_{k=0}^\infty \eta_k^2 < \infty$, we can infer that $\lim_{K\rightarrow \infty} \sum_{k=0}^{K-1} \eta_k  \Embb\left[ \left\|\nabla\Lcal(x_{k})\right\|_1\right] < \infty$ and \Cref{eq:app_signsgd_proof_result_2} follows.

    If the learning rate is chosen as $\eta_k = \frac{(1 - \theta)\|\nabla \Lcal(x_k)\|_1}{L_\infty}$ (now dependent on the trajectory of updates), substituting in \cref{eq:signSGD_pre_telescope} yields,
    \begin{align}
        \Embb_k\left[\Lcal(x_{k+1}) \right] - \Lcal(x_k)
        \leq -\frac{(1 - \theta)^2\|\nabla \Lcal(x_k)\|_1^2}{2L_\infty}, \label{eq:signnsgd_gns_step_size_ineq_intermediate}
    \end{align}
    where taking the total expectation and performing a telescopic sum as done for \cref{eq:app_signsgd_telescope} results in \cref{eq:app_signsgd_proof_result_gns_lr}.
    
    If $\Lcal(x)$ is also strongly convex, from \cref{eq:signnsgd_gns_step_size_ineq_intermediate} as $1 - \theta > 0$,
    \begin{align*}
        \Embb_k\left[\Lcal(x_{k+1}) \right] - \Lcal(x_k)
        \leq -\frac{(1 - \theta)^2\|\nabla \Lcal(x_k)\|_1^2}{2L_\infty}
        \leq -\frac{(1 - \theta)^2\mu_\infty}{L_\infty} \left(\Lcal(x_k) - \Lcal^*\right).
    \end{align*}
    Taking total expectation of the above and rearranging the inequality yields \Cref{eq:app_signsgd_proof_result_3}.
\end{proof}

\begin{theorem} \label{th:convergence_specGD_CBS}
Assume the objective function $\Lcal(X) = \Embb \left[\lcal(X; \xi)\right] : \Rmbb^{m \times n} \rightarrow \Rmbb$ is smooth with respect to the spectral norm ($\|\cdot\|_{\Scal_\infty}$), i.e.,
\begin{equation*}
    \left\|\nabla \Lcal(X) - \nabla \Lcal(X')\right\|_{\Scal_\infty} \leq L_{\Scal_\infty} \|X - X'\|_{\Scal_1} \quad \forall X, X' \in \Rmbb^{m \times n},
\end{equation*}
where $\|\cdot\|_{\Scal_1}$ is the nuclear norm, with $L_{\Scal_\infty} > 0$ and bounded below by the optimal value $\Lcal^*$. Let $\{X_k\}$ be the sequence generated by the update rule $X_{k+1} = X_k - \eta_k \matsign(G_k)$, where $G_k = \tfrac{1}{B_k}\sum_{\xi \in \Scal_k} \nabla \lcal(X_k; \xi)$ is a gradient estimate computed over a mini-batch of $B_k = \frac{\|C_{\text{row}, k}^{1/2}\|_{\Scal_1}^2}{\theta^2\|\nabla\Lcal(X_{k})\|_{\Scal_1}^2}$ i.i.d samples and $C_{\text{row}, k} = \Embb_k\left[ \left(\nabla \Lcal(X_k) - \nabla \lcal(X_k; \xi)\right)\left(\nabla \Lcal(X_k) - \nabla \lcal(X_k; \xi)\right)^\top\right]$ is the row-wise aggregated covariance matrix.

\begin{enumerate}
    \item If the learning rate is chosen as $\eta_k = \eta = \frac{1}{\sqrt{L_{\Scal_\infty} K}}$ for $K \geq 1$,
    \begin{align}
        \Embb\left[ \frac{1}{K}\sum_{k=0}^{K-1} \left\|\nabla\Lcal(x_{k})\right\|_{\Scal_1}\right]\leq \frac{\sqrt{L_{\Scal_\infty}}}{(1 -  \theta)\sqrt{K}}\left[\Lcal(x_0) - \Lcal^* + \frac{1}{2}\right].
        \label{eq:app_specsgd_proof_result_1}
    \end{align}

    \item If the learning rate is chosen such that  $\sum_{k=0}^\infty \eta_k = \infty$ and $\sum_{k=0}^\infty \eta_k^2 < \infty$,
    \begin{align}
        \liminf_{k \rightarrow \infty} ~ \Embb[\|\nabla \Lcal(X_k)\|_{\Scal_\infty}] = 0. \label{eq:app_specsgd_proof_result_2}
    \end{align}
    
    \item If the learning rate is chosen as $\eta_k = \frac{(1 - \theta)\|\nabla \Lcal(X_k)\|_{\Scal_1}}{L_{\Scal_\infty}}$, then for $K \geq 1$,
    \begin{align}
        \Embb\left[ \frac{1}{K}\sum_{k=0}^{K-1} \left\|\nabla\Lcal(X_{k})\right\|_{\Scal_1}^2\right]\leq \frac{2L_{\Scal_\infty}}{(1 -  \theta)^2K}(\Lcal(X_0) - \Lcal^*). \label{eq:app_specsgd_proof_result_gns_lr}
    \end{align}
    
    If the objective function is also $\mu_{\Scal_\infty}-$strongly convex, i.e.,
    \begin{align*}
        2 \mu_{\Scal_\infty} \left(\Lcal (X)  - \Lcal^*\right)\leq \left\|\nabla \Lcal (X)\right\|_{\Scal_1}^2 \quad \forall X \in \Rmbb^{m \times n},
    \end{align*}
    with $\mu_{\Scal_\infty} > 0$, then
    \begin{align}
        \Embb\left[\Lcal(X_{k+1})\right] - \Lcal^*
        \leq  
        \left[1 - \left(1 - \theta\right)^2\frac{\mu_{\Scal_\infty}}{L_{\Scal_\infty}}\right]\left(\Embb\left[\Lcal(X_k)\right] - \Lcal^*\right) \label{eq:app_specsgd_proof_result_3}
    \end{align}
    and $\{X_k\}$ converges to a unique optimal point in expectation at a linear rate.
\end{enumerate}
\end{theorem}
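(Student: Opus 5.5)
The plan is to mirror the proof of \Cref{th:convergence_sign_CBS}, replacing the $\ell_\infty/\ell_1$ pair with the spectral/nuclear pair. Note that the smoothness assumption as written measures the gradient difference in $\|\cdot\|_{\Scal_\infty}$ and the displacement in $\|\cdot\|_{\Scal_1}$. For the argument to match the geometry of \specgd, we will use the version with $\|\nabla\Lcal(X)-\nabla\Lcal(X')\|_{\Scal_1}\le L_{\Scal_\infty}\|X-X'\|_{\Scal_\infty}$. This is the dual pairing under which the descent lemma of \citet{balles2020geometry} (Proposition 2) applies to steps $\eta_k\matsign(G_k)$.

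\textbf{Step 1 (one-step descent).} The descent lemma gives
\[
\Lcal(X_{k+1})\le \Lcal(X_k)-\eta_k\langle\nabla\Lcal(X_k),\matsign(G_k)\rangle+\tfrac{L_{\Scal_\infty}}{2}\eta_k^2\|\matsign(G_k)\|_{\Scal_\infty}^2 .
\]
Since $\matsign(G_k)=U_kV_k^\top$ has all singular values equal to one, $\|\matsign(G_k)\|_{\Scal_\infty}\le 1$. We then take $\Embb_k$. By \Cref{lem:app_general_norm_inner_product}, applied with $\|\cdot\|=\|\cdot\|_{\Scal_\infty}$, dual norm $\|\cdot\|_{\Scal_1}$, and the unbiasedness of $G_k$, we get $\Embb_k\langle\nabla\Lcal(X_k),P_k\rangle\ge\|\nabla\Lcal(X_k)\|_{\Scal_1}-\Embb_k\|\nabla\Lcal(X_k)-G_k\|_{\Scal_1}$. \Cref{lem:app_specGD_error_bound} then bounds the error term by $\|C_{\text{row},k}^{1/2}\|_{\Scal_1}/\sqrt{B_k}$. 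With the prescribed $B_k$, this equals $\theta\|\nabla\Lcal(X_k)\|_{\Scal_1}$, which yields
\[
\Embb_k[\Lcal(X_{k+1})]-\Lcal(X_k)\le-(1-\theta)\eta_k\|\nabla\Lcal(X_k)\|_{\Scal_1}+\tfrac{L_{\Scal_\infty}}{2}\eta_k^2 .
\]

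\textbf{Step 2 (the three regimes).} We take total expectations and telescope over $k=0,\dots,K-1$, using $\Lcal\ge\Lcal^*$. This gives $\Embb\sum_k\eta_k\|\nabla\Lcal(X_k)\|_{\Scal_1}\le(1-\theta)^{-1}[\Lcal(X_0)-\Lcal^*+\tfrac{L_{\Scal_\infty}}2\sum_k\eta_k^2]$.
\begin{itemize}
\item Constant step: plugging in $\eta=1/\sqrt{L_{\Scal_\infty}K}$ gives $\tfrac{L_{\Scal_\infty}}{2}K\eta^2=\tfrac12$. Dividing by $K\eta$ then yields \eqref{eq:app_specsgd_proof_result_1}.
\item Robbins--Monro steps: the right-hand side stays bounded while $\sum\eta_k=\infty$, which forces $\liminf_k\Embb\|\nabla\Lcal(X_k)\|_{\Scal_1}=0$. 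The stated $\Scal_\infty$ version follows because $\|\cdot\|_{\Scal_\infty}\le\|\cdot\|_{\Scal_1}$.
\item Adaptive step: substituting $\eta_k=(1-\theta)\|\nabla\Lcal(X_k)\|_{\Scal_1}/L_{\Scal_\infty}$ into the Step 1 inequality gives the decrease $-(1-\theta)^2\|\nabla\Lcal(X_k)\|_{\Scal_1}^2/(2L_{\Scal_\infty})$. Telescoping this gives \eqref{eq:app_specsgd_proof_result_gns_lr}. Combining it with the PL-type condition $\|\nabla\Lcal\|_{\Scal_1}^2\ge2\mu_{\Scal_\infty}(\Lcal-\Lcal^*)$ and taking total expectation gives the contraction \eqref{eq:app_specsgd_proof_result_3}. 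Linear convergence follows, with uniqueness of the optimum coming from strong convexity.
\end{itemize}

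\textbf{Main obstacle.} The only nonroutine point is Step 1: making sure the smoothness constant pairs correctly with the unit-spectral-norm step, so that the quadratic term is exactly $\tfrac{L_{\Scal_\infty}}2\eta_k^2$ rather than picking up a factor $r$. The stated assumption measures the gradient difference in $\Scal_\infty$ and the displacement in $\Scal_1$. Under that literal reading the quadratic term is $\tfrac{L_{\Scal_\infty}}2\eta_k^2\|P_k\|_{\Scal_1}^2=\tfrac{L_{\Scal_\infty}}2 r^2\eta_k^2$, not $\tfrac{L_{\Scal_\infty}}2\eta_k^2$. So the theorem holds as stated only under the dual pairing adopted above. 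The alternative is to keep the literal assumption and carry a factor $r^2\le m^2$ into the constants, which changes the right-hand sides of \eqref{eq:app_specsgd_proof_result_1}, \eqref{eq:app_specsgd_proof_result_gns_lr} and \eqref{eq:app_specsgd_proof_result_3}.
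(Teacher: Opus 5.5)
Your proposal is correct and follows the paper's proof essentially step for step. Both arguments use the descent lemma with a unit-spectral-norm step, then the dual-norm inner-product lemma together with the nuclear-norm error bound and the prescribed $B_k$ to obtain the $(1-\theta)$ factor, then telescoping for the three step-size regimes and the PL-type contraction. Your remark about the smoothness pairing is also right: the paper's own proof bounds the quadratic term by $\tfrac{L_{\Scal_\infty}}{2}\|\eta_k \matsign(G_k)\|_{\Scal_\infty}^2$, which tacitly uses $\|\nabla\Lcal(X)-\nabla\Lcal(X')\|_{\Scal_1}\le L_{\Scal_\infty}\|X-X'\|_{\Scal_\infty}$, the analogue of the correctly paired assumption in the \signsgd theorem, rather than the swapped version written in the statement.
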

\begin{proof}
    Under the stated smoothness assumption, from \citet{carlson2015preconditioned}, the change in the loss function in iteration $k$ can be bounded as
    \begin{align}
        \Lcal(X_{k+1}) - \Lcal(X_k) 
        &\leq -\eta_k \langle \nabla\Lcal(X_{k}), \matsign(G_k)\rangle + \frac{L_{\Scal_\infty}}{2} \|\eta_k \matsign(G_k)\|_{\Scal_\infty}^2 
        \nonumber \\
        &= -\eta_k \langle \nabla\Lcal(X_{k}), \matsign(G_k)\rangle + \eta_k^2\frac{L_{\Scal_\infty}}{2}, \nonumber
    \end{align}
    where the equality follows from $\|\matsign(G_k)\|_{\Scal_\infty} = 1$. 
    Taking expectation of the above bound given $x_k$ and using \cref{lem:app_general_norm_inner_product} and \cref{lem:app_specGD_error_bound},

    \begin{align}
        \Embb_k\left[\Lcal(X_{k+1})\right] - \Lcal(X_k)
        \leq -\eta_k\|\nabla\Lcal(X_{k})\|_{\Scal_1} + \eta_k \frac{\|C_{\text{row}, k}^{1/2}\|_{\Scal_1}}{\sqrt{B_k}} + \eta_k^2 \frac{L_{\Scal_\infty}}{2}
        = -\eta_k (1 - \theta) \left\|\nabla\Lcal(x_{k})\right\|_{\Scal_1} + \eta_k^2 \frac{L_{\Scal_\infty}}{2},  \label{eq:SpecGD_pre_telescope}
    \end{align}
    where the equality follows from the defined batch size. 
    
    Following the same procedure as \cref{th:convergence_sign_CBS} of taking the total expectation of \Cref{eq:SpecGD_pre_telescope} with respect to the entire trajectory of updates, taking the telescopic sum for $k = 0, 1, 2, \dots, K-1$ and rearranging,
    \begin{align}
        \Embb\left[ \sum_{k=0}^{K-1} \eta_k \left\|\nabla\Lcal(X_{k})\right\|_{\Scal_1}\right]
        \leq
        \frac{1}{(1-\theta)}\left[\Lcal(X_0) - \Lcal^* + \frac{L_{\Scal_{\infty}}}{2}\sum_{k=0}^{K-1} \eta_k^2\right].
        \label{eq:app_specsgd_telescope}
    \end{align}
    
    Substituting the learning rate as $\eta_k = \eta = \frac{1}{\sqrt{L_{\Scal_\infty} K}}$ in \Cref{eq:app_specsgd_telescope} and multiplying by $\frac{1}{\sqrt{K}}$ yields \Cref{eq:app_specsgd_proof_result_1}.

    If the learning rate satisfies $\sum_{k=0}^\infty \eta_k = \infty$ and $\sum_{k=0}^\infty \eta_k^2 < \infty$, we can infer that $\lim_{K\rightarrow \infty} \sum_{k=0}^{K-1} \eta_k  \Embb\left[ \left\|\nabla\Lcal(X_{k})\right\|_{\Scal_1}\right] < \infty$ and \Cref{eq:app_specsgd_proof_result_2} follows.

    
     If the learning rate is chosen as $\eta_k = \frac{(1 - \theta)\|\nabla \Lcal(X_k)\|_{\Scal_1}}{L_{\Scal_\infty}}$ (now dependent on the trajectory of updates), substituting in \cref{eq:SpecGD_pre_telescope} yields,
    \begin{align}
        \Embb_k\left[\Lcal(X_{k+1}) \right] - \Lcal(X_k)
        \leq -\frac{(1 - \theta)^2\|\nabla \Lcal(X_k)\|_{\Scal_1}^2}{2L_{\Scal_\infty}},
        \label{eq:specsgd_gns_step_size_ineq_intermediate}
    \end{align}
    where taking the total expectation and performing a telescopic sum as done for \cref{eq:app_specsgd_telescope} results in \cref{eq:app_specsgd_proof_result_gns_lr}.
    
    If $\Lcal(X)$ is also strongly convex, from \cref{eq:specsgd_gns_step_size_ineq_intermediate} as $1 - \theta > 0$,
   \begin{align*}
        \Embb_k\left[\Lcal(X_{k+1}) \right] - \Lcal(X_k)
        \leq -\frac{(1 - \theta)^2\|\nabla \Lcal(X_k)\|_{\Scal_1}^2}{2L_{\Scal_\infty}}
        \leq -\frac{(1 - \theta)^2\mu_{\Scal_\infty}}{L_{\Scal_\infty}} \left(\Lcal(X_k) - \Lcal^*\right)
    \end{align*}
    Taking total expectation of the above and rearranging the inequality yields \Cref{eq:app_specsgd_proof_result_3}.

\end{proof}
\section{Alternative Definitions of Critical Batch Size}
\label{appendix:alt-cbs}

This appendix formalizes several common ``turning-point'' definitions of CBS. For \signsgd and \specgd, these definitions differ only by constant factors while using the same intrinsic scale of GNS.

Recall from \Cref{subsec:cbs_derivations}, \Cref{eq:delta_star_summary}, for the non-Euclidean optimizers analyzed, the optimal single-step expected loss improvement takes the form:
\begin{equation}
\label{eq:appendix_generic_delta}
    \Delta^*(B) = \Delta^*_{\max} \left(1 - \sqrt{\frac{\Bcal}{B}}\right)^2, \quad \text{for } B > \Bcal,
\end{equation}
where $\Delta^*_{\max} = \lim_{B \to \infty} \Delta^*(B)$ is the deterministic limit and $\Bcal$ is the geometry-dependent GNS:
\begin{align}
\label{eq:appendix_bcal_instances}
    \Bcal_{\ell_1} &= \left(\frac{\|\sigma\|_1}{\|g\|_1}\right)^2
    && \text{(\signsgd, $\ell_\infty$ geometry)}, \\
    \Bcal_{\Scal_1} &= \left(\frac{\|C_{\text{row}, k}^{1/2}\|_{\Scal_1}}{\|G\|_{\Scal_1}}\right)^2
    && \text{(\specgd,  Schatten-$\infty$ norm)},
\end{align}
for our two non-Euclidean cases. \Cref{eq:appendix_generic_delta} arises by factoring the constants in the
optimized one-step expressions (e.g., $\|\nabla \Lcal(x_k)\|_1$) that do not
affect the location of turning points in $B$.

\begin{definition}[Fraction-of-Maximum CBS]
    The fraction-of-maximum CBS is defined as the batch size $B_{\CBS}(\kappa)$ required to achieve a fraction $\kappa \in (0,1)$ of the maximum possible improvement:
    \begin{equation}
    \label{eq:appendix_cbs_kappa_def}
        B_{\CBS}(\kappa) := \inf \left\{ B : \Delta^*(B) \geq \kappa \Delta^*_{\max} \right\}.
    \end{equation}
\end{definition}

\noindent \textbf{Derivation.}
Solving $\left(1 - \sqrt{\Bcal/B}\right)^2 = \kappa$ for $B$ yields:
\begin{equation}
    \label{eq:appendix_cbs_kappa_solution}
    B_{\CBS}(\kappa) = \left(\frac{1}{1 - \sqrt{\kappa}}\right)^2 \Bcal.
\end{equation}
This confirms that the CBS is a constant scaling of the GNS.


\paragraph{Connection to the $\theta$-parameterization.}
If we select $B=\theta^{-2}\Bcal$, as specified in \Cref{eq:gns_to_batch_size}, then 
\[
\frac{\Delta^*(B)}{\Delta^*(\infty)} = (1-\theta)^2.
\]

Thus, choosing $\theta$ is equivalent to choosing the fraction
$\kappa=(1-\theta)^2$ in \Cref{eq:appendix_cbs_kappa_def,eq:appendix_cbs_kappa_solution}, recovering $B=\theta^{-2}\Bcal$.

\begin{definition}[Inflection Point CBS]
    The inflection point CBS is defined as the batch size $B_{\mathrm{infl}}$ corresponding to the inflection point of the improvement curve, where the rate of marginal gain begins to decrease:
    \begin{equation}
        B_{\mathrm{infl}} := \left\{ B : \frac{d^2}{dB^2} \Delta^*(B) = 0 \right\}.
    \end{equation}
\end{definition}

\noindent \textbf{Derivation.}
Let $f(B) = (1 - \sqrt{\Bcal/B})^2$. Differentiating twice with respect to $B$ yields:
\begin{align}
    f'(B) &= \Bigl(1 - \sqrt{\tfrac{\Bcal}{B}}\Bigr)\cdot
            \Bigl(\sqrt{\Bcal}\,B^{-3/2}\Bigr), \\
    f''(B) &= -\frac{3}{2}\sqrt{\Bcal}\,B^{-5/2} + 2\Bcal\,B^{-3}.
\end{align}
Setting $f''(B)=0$ and solving for $B$ gives
\begin{equation}
\label{eq:appendix_bcurv}
    B_{\mathrm{curv}}
    = \frac{16}{9}\,\Bcal.
\end{equation}

\begin{definition}[Maximum Efficiency CBS]
    The maximum efficiency CBS is defined as the batch size $B_{\mathrm{eff}}$ that maximizes the improvement per sample (computational efficiency):
    \begin{equation}
        B_{\mathrm{eff}} := \arg\max_{B \geq 1} \frac{\Delta^*(B)}{B}.
    \end{equation}
\end{definition}

\noindent \textbf{Derivation.}
Let $h(B) = \frac{1}{B} (1 - \sqrt{\Bcal/B})^2$ and $t = \sqrt{\Bcal/B}$. Then we maximize $t^2 (1 - t)^2$ for $t \in (0, 1)$. The stationarity condition yields $t = 1/2$. Substituting back to $B$:
\begin{equation}
\label{eq:appendix_beff}
    B_{\mathrm{eff}} = 4 \Bcal.
\end{equation}

\paragraph{Instantiations for \signsgd and \specgd.}

Plugging \Cref{eq:appendix_bcal_instances} into \Cref{eq:appendix_bcurv,eq:appendix_beff} yields:
\begin{align*}
    B^{\mathrm{sign}}_{\mathrm{inf}} &= \frac{16}{9}\left(\frac{\|\sigma\|_1}{\|g\|_1}\right)^2,
    &
    B^{\mathrm{sign}}_{\mathrm{eff}} &= 4\left(\frac{\|\sigma\|_1}{\|g\|_1}\right)^2, \\
    B^{\mathrm{spec}}_{\mathrm{inf}} &= \frac{16}{9}\left(\frac{\|C_{\text{row}, k}^{1/2}\|_{\Scal_1}}{\|G\|_{\Scal_1}}\right)^2,
    &
    B^{\mathrm{spec}}_{\mathrm{eff}} &= 4\left(\frac{\|C_{\text{row}, k}^{1/2}\|_{\Scal_1}}{\|G\|_{\Scal_1}}\right)^2.
\end{align*}

\paragraph{Comparison with Euclidean SGD.}
From \Cref{eq:delta_star_summary} and \citep{mccandlish2018empirical}, \sgd follows a saturation curve of the form $\Delta^*(B) \propto (1 + \Bcal_{\ell_2}/B)^{-1}$.
This function is strictly concave for $B > 0$, meaning it lacks an inflection point. Furthermore, the efficiency ratio $\Delta^*(B)/B$ is strictly decreasing, meaning the maximum efficiency occurs at $B=1$.
Thus, the definitions for $B_{\mathrm{infl}}$ and $B_{\mathrm{eff}}$ are ill-defined for standard SGD. This necessitates the use of the Fraction-of-Maximum definition (commonly with $\kappa=0.5$, yielding $B=\Bcal_{\ell_2}$) in the Euclidean setting.
In contrast, the non-Euclidean geometries of \signsgd and \specgd induce an initial convex phase in optimization, making $B_{\mathrm{infl}}$ and $B_{\mathrm{eff}}$ valid alternate targets.
All three definitions, however, result in $B \propto \Bcal$. In our experiments, we found the tuned value of $\theta$ to align closely with the maximum efficiency CBS for language models.

\section{Experimental Setup} \label{app:exp_setup}


\subsection{Language Models} \label{app:exp_setup_language}
For our language model experimental runs, we utilize a learning rate schedule with a linear warm-up for the first 15\% of the total sample budget, followed by a cosine decay schedule. For the adaptive approach, we maintain a constant initial batch period ($I$) that matches the learning rate warm-up duration, measure the GNS every 100 iterations ($F=100$), and fix the scaling coefficient at $\theta = 0.6$ for all configurations. \cref{tab:hyperparameters_llama} summarizes the search space for hyperparameter tuning.

\begin{table}[ht]
\caption{Summary of hyperparameter search space for language modeling experiments.
}
\label{tab:hyperparameters_llama}
\begin{center}
\begin{small}
\begin{sc}
\begin{tabular}{llc}
\toprule
Method & Hyperparameter & Parameter Set  \\
\midrule
\multirow{3}{*}{\cref{alg:structure}} & $(\beta^N$, $\beta^M)$ & $(0.9, 0.9)$ \\
& $\theta$  & 0.6 \\
& $F$  & 100 iterations \\
\midrule
\multirow{6}{*}{All Methods} & Learning rate schedule & cosine \\
& Warmup fraction \% & 15\% \\
& Learning rate min fraction \% & 0.0 \\
& Decoupled weight decay & 0.1 \\
& \multirow{2}{*}{Learning rate} &  \multirow{2}{*}{\begin{tabular}[c]{@{}c@{}}$\{1e^{-4}, 2e^{-4}, 4e^{-4}, 6e^{-4}, 8e^{-4},$ \\  $\quad 1e^{-3}, 2e^{-3}, 4e^{-3}, 8e^{-3}, 1e^{-2}$\}\end{tabular}} \\
& & \\
\midrule
\multirow{2}{*}{\signum} & \multirow{2}{*}{EMA parameter $\beta$}  & \multirow{2}{*}{\begin{tabular}[c]{@{}c@{}}$\{0.7, 0.8, 0.85, 0.875, 0.9, 0.925$ \\  $\quad 0.95, 0.975, 0.9875, 0.999\}$\end{tabular}} \\ \\
\midrule
\multirow{5}{*}{\adamw} & \multirow{2}{*}{$\beta_1$}& \multirow{2}{*}{\begin{tabular}[c]{@{}c@{}}$\{0.7, 0.8, 0.85, 0.875, 0.9, 0.925$ \\  $\quad 0.95, 0.975, 0.9875, 0.999\}$\end{tabular}} \\ \\
& \multirow{2}{*}{$\beta_2$}& \multirow{2}{*}{\begin{tabular}[c]{@{}c@{}}$\{0.7, 0.8, 0.85, 0.875, 0.9, 0.925$ \\  $\quad 0.95, 0.975, 0.9875, 0.999\}$\end{tabular}} \\ \\
& $\epsilon$ & $1e^{-8}$ \\
\midrule
\multirow{3}{*}{\muon} & Newton Schulz Iterations & $5$ \\
& \multirow{2}{*}{Momentum $\beta$}  & \multirow{2}{*}{\begin{tabular}[c]{@{}c@{}}$\{0.7, 0.8, 0.85, 0.875, 0.9, 0.925$ \\  $\quad 0.95, 0.975, 0.9875, 0.999\}$\end{tabular}} \\ \\
\bottomrule
\end{tabular}
\end{sc}
\end{small}
\end{center}
\end{table}

\subsection{Vision Models} \label{app:exp_setup_vision}

We run two image classification workloads that cover common vision architectures. First, we train SimpleViT \citep{beyer2022better} over the Imagewoof dataset \citep{Howard_Imagewoof_2019} which isolates the transformer setting in a vision task. Second, we train ResNet-18 over the CIFAR-10 dataset to cover CNNs with residual connections. 

Across both workloads, we compare constant batch training to the adaptive strategy in Algorithm~\ref{alg:structure}. For ResNet-18 over the CIFAR-10, we train for 100 epochs, and for SimpleViT over the Imagewoof we use a constant batch baseline of $B=128$. For adaptive batching, we fix $(\beta^N,\beta^M)=(0.9,0.9)$ and sweep the tolerance parameter $\theta$. The batch size is updated periodically with period $F$ set to 1 epoch for ResNet-18 over the CIFAR-10 and 3 epochs for SimpleViT over the Imagewoof.
\cref{tab:hyperparameters_cifar10,tab:hyperparameters_imagewoof} summarize the hyperparameter search spaces.


\begin{table}[ht]

\caption{Summary of hyperparameter search space used for training SimpleViT over Imagewoof dataset.}
\label{tab:hyperparameters_imagewoof}
\begin{center}
\begin{small}
\begin{sc}
\begin{tabular}{llc}
\toprule
Method & Hyperparameter & Parameter Set  \\
\midrule
\multirow{3}{*}{\cref{alg:structure}}
& $(\beta^N$, $\beta^M)$ & $(0.9, 0.9)$ \\
& $\theta$  & \{0.25, 0.5, 1, 2\} \\
& $F$  & 3 epochs \\
\midrule
\multirow{6}{*}{All Methods} & Epochs & $100$ \\
& Momentum & $\{0.925, 0.95, 0.975, 0.9875, 0.999\}$ \\
& Learning rate schedule & cosine \\
& Warmup fraction \% & $4\%$ \\
& Learning rate min value & $1e^{-6}$\\ 
& Decoupled weight decay & $0.0001$ \\
\midrule
\multirow{2}{*}{MSGD / SGD} & Learning Rate & $\{0.005, 0.007, 0.01, 0.023, 0.05, 0.07, 0.1, 0.2\}$ \\
& Momentum & ${0.925, 0.95, 0.975, 0.9875, 0.999}$\\
\midrule
\multirow{2}{*}{\signsgd} & Learning Rate & $\{0.001, 0.005, 0.007, 0.01, 0.05\}$ \\
& Epochs & $200$ \\
\midrule
\multirow{2}{*}{\signum} & Learning Rate & $\{1e^{-4}, 5e^{-4}, 1e^{-3}, 5e^{-3}, 1e^{-2}\}$ \\
& EMA Parameter $\beta$ & $\{0.7, 0.8, 0.85, 0.875, 0.9, 0.925, 0.95, 0.975, 0.9875, 0.999\}$ \\
\midrule
\multirow{4}{*}{\adamw} & Learning Rate & $\{1e^{-4}, 3e^{-4}, 6e^{-4}, 1e^{-3}, 3e^{-3}, 6e^{-3}, 1e^{-2}\}$ \\
& $\beta_1$ & $\{0.925, 0.95, 0.975, 0.9875, 0.999\}$ \\
& $\beta_2$ & $0.999$ \\
& $ \epsilon$ & $1e^{-8}$ \\
\midrule
\multirow{2}{*}{\muon} & Learning Rate & $\{6e^{-4}, 1e^{-3}, 3e^{-3}, 6e^{-3}, 1e^{-2}\}$ \\
& {Momentum $\beta$} & $0.95$\\
\bottomrule
\end{tabular}
\end{sc}
\end{small}
\end{center}
\end{table}

\begin{table}[h]
\caption{Summary of hyperparameter search spaces for training ResNet-18 over CIFAR-10 dataset.}
\label{tab:hyperparameters_cifar10}
\begin{center}
\begin{small}
\begin{sc}
\begin{tabular}{llc}
\toprule
Method & Hyperparameter & Parameter Set  \\
\midrule
\multirow{3}{*}{\cref{alg:structure}} 
& $(\beta^N$, $\beta^M)$ & $(0.9, 0.9)$ \\
& $\theta$  & \{0.125, 0.25, 0.5, 1, 2.0, 4.0\} \\
& $F$  & 1 epoch \\
& Initial Batch Size & $\{64, 256\}$ \\
\midrule
\multirow{6}{*}{All Methods} & Epochs & $100$ \\
& Learning rate schedule & cosine \\
& Warmup fraction \% & $5\%$ \\
& Learning rate min value & $1e^{-6}$ \\
& Decoupled weight decay & $0.0$\\
\midrule
\multirow{2}{*}{MSGD/SGD} & Learning Rate & $\{0.1, 0.2, 0.3, 0.5, 0.75, 1.0\}$ \\
& Momentum & $\{0.9, 0.925, 0.95, 0.975\}$\\
\midrule
\multirow{1}{*}{\signsgd} & Learning Rate & $\{1e^{-4}, 1e^{-3}, 1e^{-2}\}$ \\
\midrule
\multirow{2}{*}{\signum} & Learning Rate & $\{1e^{-4}, 1e^{-3}, 1e^{-2}, 1e^{-1}\}$\\
& EMA Parameter $\beta$ & $\{0.9, 0.925, 0.95, 0.95\}$\\
\midrule
\multirow{4}{*}{\adamw} & Learning Rate & $\{1e^{-4}, 3e^{-4}, 1e^{-3}, 3e^{-3}, 1e^{-2}, 3e^{-2}, 1e^{-1}\}$\\
& $\beta_1$ & $0.9$ \\
& $\beta_2$ & $0.999$ \\
& $\epsilon$ & $1e^{-8}$ \\
\midrule
\multirow{3}{*}{\muon} & Learning Rate & $\{1e^{-4}, 1e^{-3}, 1e^{-2}, 1e^{-1}\}$ \\
& {Momentum $\beta$} & $\{0.9, 0.925, 0.95, 0.95\}$\\
\bottomrule
\end{tabular}
\end{sc}
\end{small}
\end{center}
\end{table}

\section{Additional Experimental Results}
In this section, we provide additional numerical experiments to support the empirical claims made in the main body.

\subsection{Language Models} \label{app:languange}

\begin{table}[ht]
\renewcommand{\arraystretch}{1.3}
\caption{Comparison of constant versus adaptive batch sizes for a 160M Llama 3 trained for 3.2B tokens (10 seeds) over the C4 dataset using \signsgd. Results show that adaptive batch sizes achieve improved training efficiency over constant-batch baselines and better validation loss with learning rate scaling.}
\label{tab:signsgd_hueristics_effect}
\begin{center}
\begin{small}
\begin{sc}
\begin{tabular}{lccccc}
\toprule
\multirow{2}{*}{\begin{tabular}[c]{@{}c@{}}Batch Size \\ Method\end{tabular}} &
\multirow{2}{*}{\begin{tabular}[c]{@{}c@{}}Initial \\ Batch Size \end{tabular}} &
\multirow{2}{*}{\begin{tabular}[c]{@{}c@{}}Learning \\ Rate Scaling\end{tabular}} &
\multirow{2}{*}{\begin{tabular}[c]{@{}c@{}}Validation \\ Loss\end{tabular}} &
\multirow{2}{*}{\begin{tabular}[c]{@{}c@{}}Median \\ Training Steps\end{tabular}} \\ \\
\midrule
\multirow{3}{*}{Constant} & 64 & - & $4.1390 \pm 0.0235$& 24415\\ \cmidrule{2-5}
& 128 & - & $3.9963 \pm 0.0098$ & 8426 \\ \cmidrule{2-5}
& 256 & - & $ 3.9357 \pm 0.0057$& 6103\\
\midrule
\multirow{6}{*}{Adaptive} & \multirow{2}{*}{64} & False & $3.6857 \pm 0.0085$ & 9285 \\ \cmidrule{3-5}
& & True & $\textbf{3.6817} \pm 0.0094$ & 9950 \\ \cmidrule{2-5}
& \multirow{2}{*}{128} & False & $3.7082 \pm 0.0068$ & 7617\\ \cmidrule{3-5}
& & True & $\textbf{3.7029} \pm 0.0094$ & 7826\\ \cmidrule{2-5}
& \multirow{2}{*}{256} & False & $3.8539 \pm 0.0039$ & 5007 \\ \cmidrule{3-5}
& & True & $\textbf{3.8396} \pm 0.0012$ & 5274\\
\bottomrule
\end{tabular}
\end{sc}
\end{small}
\end{center}
\end{table}

\begin{table}[ht]
\renewcommand{\arraystretch}{1.3}
\caption{Comparison of adaptive batch sizes using $\lcal_1$ and $\lcal_2$ GNS with \signsgd for 160M Llama 3 trained for 3.2B tokens (10 seeds) over the C4 dataset. Results show that adaptive batch sizes using $\lcal_1$ GNS, which aligns with the optimizer geometry, performs better.}
\label{tab:GNS_l2_l1_comparison_llama_signsgd}
\begin{center}
\begin{small}
\begin{sc}
\begin{tabular}{lcccc}
\toprule
\multirow{2}{*}{\begin{tabular}[c]{@{}c@{}}GNS\end{tabular}} &
\multirow{2}{*}{\begin{tabular}[c]{@{}c@{}}Initial \\ Batch Size \end{tabular}} &
\multirow{2}{*}{\begin{tabular}[c]{@{}c@{}}Validation \\ Loss\end{tabular}} &
\multirow{2}{*}{\begin{tabular}[c]{@{}c@{}}Median \\ Training Steps\end{tabular}} \\ \\
\midrule
\multirow{3}{*}{$\Bcal_{\lcal_2}$} & 64 & $3.7431 \pm 0.0402$& 10798\\ \cmidrule{2-4}
& 128  & $3.7752 \pm 0.0088$ &  8869\\ \cmidrule{2-4}
& 256 & $3.8906 \pm 0.0117$& 5574\\
\midrule
\multirow{3}{*}{$\Bcal_{\lcal_1}$} & 64 & $\textbf{3.6817} \pm 0.0094$ & 9950 \\ \cmidrule{2-4}
& 128 & $\textbf{3.7029} \pm 0.0094$ & 7826\\ \cmidrule{2-4}
& 256 & $\textbf{3.8396} \pm 0.0012$ & 5274\\
\bottomrule
\end{tabular}
\end{sc}
\end{small}
\end{center}
\end{table}

This section presents extended results for the 160M Llama 3 model trained using \signsgd.
\cref{tab:signsgd_hueristics_effect} evaluates our adaptive approach against baseline strategies utilizing constant batch sizes.
The adaptive batch size strategy consistently outperforms constant baselines, reaching lower validation loss in significantly fewer training steps.
The results also indicate a clear performance gain when scaling the learning rate in tandem with the batch size.

\cref{tab:GNS_l2_l1_comparison_llama_signsgd} compares the performance of the adaptive strategy when using the $\lcal_1$ and $\lcal_2$ GNS metrics. For $\lcal_2$, $\theta = 0.3$ is used after tuning.
The results demonstrate the adaptive strategy via $\lcal_1$ GNS yields better results.

To further assess the robustness of our approach, \cref{fig:signsgd_diagnosis_plots} visualizes training trajectories across multiple seeds for the adaptive strategy (starting at a batch size of 64) and \cref{tab:GNS_l1_theta_sensitivity_llama_signsgd} provides sensitivity results for $\theta$.
While individual seeds produce unique batch size paths in \cref{fig:signsgd_diagnosis_plots}, based on their specific gradient noise profiles, all runs yield better performance than the constant batch size baselines.
This consistency demonstrates that the proposed method effectively adapts the training schedule to the observed data distribution. In \cref{tab:GNS_l1_theta_sensitivity_llama_signsgd}, one can see that even with the change in results over different values for $\theta$, the result loss form adaptive strategy via $\lcal_1$ GNS is still better than the best result via $\lcal_2$ GNS.

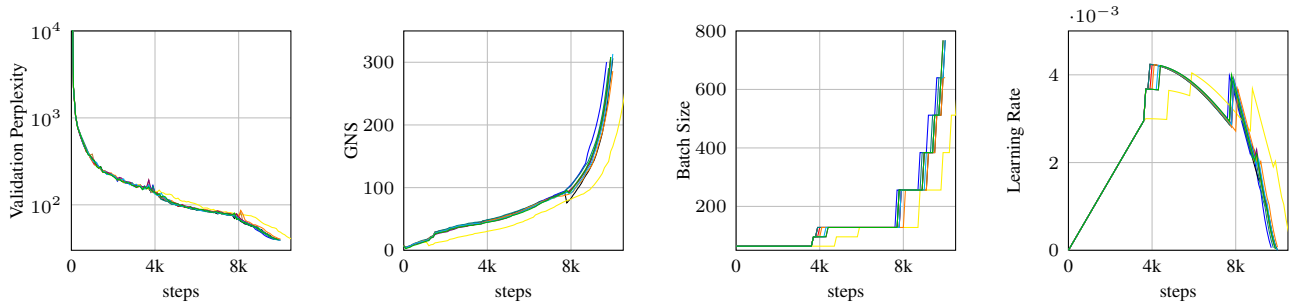
\begin{figure}[ht]
    \centering
    \begin{tikzpicture}
        \begin{groupplot}[
            Llama3_dissect_plots,  
            group style={
                group size=4 by 1,       
                horizontal sep=1.5cm,    
            }
        ]
            \nextgroupplot[
                xlabel={steps},
                ylabel={Validation Perplexity},
                xmin=0, xmax=10500,
                ymin=30, ymax=1e4,
                ymode=log,
                xtick={0, 4000, 8000},
                xticklabels={0, 4k, 8k}, 
            ]
            \addplot table [x=_step, y=perplexity, col sep=comma] {Llama_results/llama_160M_appendix_sign_results/gns_64_dissect_1.csv};
            \addplot table [x=_step, y=perplexity, col sep=comma] {Llama_results/llama_160M_appendix_sign_results/gns_64_dissect_2.csv};
            \addplot table [x=_step, y=perplexity, col sep=comma] {Llama_results/llama_160M_appendix_sign_results/gns_64_dissect_3.csv};
            \addplot table [x=_step, y=perplexity, col sep=comma] {Llama_results/llama_160M_appendix_sign_results/gns_64_dissect_4.csv};
            \addplot table [x=_step, y=perplexity, col sep=comma] {Llama_results/llama_160M_appendix_sign_results/gns_64_dissect_5.csv};
            \addplot table [x=_step, y=perplexity, col sep=comma] {Llama_results/llama_160M_appendix_sign_results/gns_64_dissect_6.csv};
            \addplot table [x=_step, y=perplexity, col sep=comma] {Llama_results/llama_160M_appendix_sign_results/gns_64_dissect_7.csv};
            \addplot table [x=_step, y=perplexity, col sep=comma] {Llama_results/llama_160M_appendix_sign_results/gns_64_dissect_8.csv};
            \addplot table [x=_step, y=perplexity, col sep=comma] {Llama_results/llama_160M_appendix_sign_results/gns_64_dissect_9.csv};
            \addplot table [x=_step, y=perplexity, col sep=comma] {Llama_results/llama_160M_appendix_sign_results/gns_64_dissect_10.csv};

            \nextgroupplot[
                ymode=normal,
                xlabel={steps},
                ylabel={GNS},
                xmin=0, xmax=10500,
                ymin=0, ymax=350,
                xtick={0, 4000, 8000},
                xticklabels={0, 4k, 8k}, 
            ]
            \addplot table [x=_step, y=adaptive_batch_metrics/gns_ema, col sep=comma] {Llama_results/llama_160M_appendix_sign_results/gns_64_dissect_1.csv};
            \addplot table [x=_step, y=adaptive_batch_metrics/gns_ema, col sep=comma] {Llama_results/llama_160M_appendix_sign_results/gns_64_dissect_2.csv};
            \addplot table [x=_step, y=adaptive_batch_metrics/gns_ema, col sep=comma] {Llama_results/llama_160M_appendix_sign_results/gns_64_dissect_3.csv};                \addplot table [x=_step, y=adaptive_batch_metrics/gns_ema, col sep=comma] {Llama_results/llama_160M_appendix_sign_results/gns_64_dissect_4.csv};                \addplot table [x=_step, y=adaptive_batch_metrics/gns_ema, col sep=comma] {Llama_results/llama_160M_appendix_sign_results/gns_64_dissect_5.csv};                
            \addplot table [x=_step, y=adaptive_batch_metrics/gns_ema, col sep=comma] {Llama_results/llama_160M_appendix_sign_results/gns_64_dissect_6.csv};                \addplot table [x=_step, y=adaptive_batch_metrics/gns_ema, col sep=comma] {Llama_results/llama_160M_appendix_sign_results/gns_64_dissect_7.csv};                \addplot table [x=_step, y=adaptive_batch_metrics/gns_ema, col sep=comma] {Llama_results/llama_160M_appendix_sign_results/gns_64_dissect_8.csv};                \addplot table [x=_step, y=adaptive_batch_metrics/gns_ema, col sep=comma] {Llama_results/llama_160M_appendix_sign_results/gns_64_dissect_9.csv};                \addplot table [x=_step, y=adaptive_batch_metrics/gns_ema, col sep=comma] {Llama_results/llama_160M_appendix_sign_results/gns_64_dissect_10.csv};
            
            \nextgroupplot[
                ymode=normal,
                xlabel={steps},
                ylabel={Batch Size},
                xmin=0, xmax=10500,
                ymin=50, ymax=800,
                xtick={0, 4000, 8000},
                xticklabels={0, 4k, 8k}, 
            ]
            \addplot table [x=_step, y=batch_size/global_batch_size, col sep=comma] {Llama_results/llama_160M_appendix_sign_results/gns_64_dissect_1.csv};
            \addplot table [x=_step, y=batch_size/global_batch_size, col sep=comma] {Llama_results/llama_160M_appendix_sign_results/gns_64_dissect_2.csv};
            \addplot table [x=_step, y=batch_size/global_batch_size, col sep=comma] {Llama_results/llama_160M_appendix_sign_results/gns_64_dissect_3.csv};
            \addplot table [x=_step, y=batch_size/global_batch_size, col sep=comma] {Llama_results/llama_160M_appendix_sign_results/gns_64_dissect_4.csv};
            \addplot table [x=_step, y=batch_size/global_batch_size, col sep=comma] {Llama_results/llama_160M_appendix_sign_results/gns_64_dissect_5.csv};
            \addplot table [x=_step, y=batch_size/global_batch_size, col sep=comma] {Llama_results/llama_160M_appendix_sign_results/gns_64_dissect_6.csv};
            \addplot table [x=_step, y=batch_size/global_batch_size, col sep=comma] {Llama_results/llama_160M_appendix_sign_results/gns_64_dissect_7.csv};
            \addplot table [x=_step, y=batch_size/global_batch_size, col sep=comma] {Llama_results/llama_160M_appendix_sign_results/gns_64_dissect_8.csv};
            \addplot table [x=_step, y=batch_size/global_batch_size, col sep=comma] {Llama_results/llama_160M_appendix_sign_results/gns_64_dissect_9.csv};
            \addplot table [x=_step, y=batch_size/global_batch_size, col sep=comma] {Llama_results/llama_160M_appendix_sign_results/gns_64_dissect_10.csv};

            \nextgroupplot[
                ymode=normal,
                xlabel={steps},
                ylabel={Learning Rate},
                xmin=0, xmax=10500,
                ymin=0, ymax=0.005,
                xtick={0, 4000, 8000},
                xticklabels={0, 4k, 8k}, 
            ]
            \addplot table [x=_step, y=lr, col sep=comma] {Llama_results/llama_160M_appendix_sign_results/gns_64_dissect_1.csv};
            \addplot table [x=_step, y=lr, col sep=comma] {Llama_results/llama_160M_appendix_sign_results/gns_64_dissect_2.csv};
            \addplot table [x=_step, y=lr, col sep=comma] {Llama_results/llama_160M_appendix_sign_results/gns_64_dissect_3.csv};                \addplot table [x=_step, y=lr, col sep=comma] {Llama_results/llama_160M_appendix_sign_results/gns_64_dissect_4.csv};                \addplot table [x=_step, y=lr, col sep=comma] {Llama_results/llama_160M_appendix_sign_results/gns_64_dissect_5.csv};                
            \addplot table [x=_step, y=lr, col sep=comma] {Llama_results/llama_160M_appendix_sign_results/gns_64_dissect_6.csv};                \addplot table [x=_step, y=lr, col sep=comma] {Llama_results/llama_160M_appendix_sign_results/gns_64_dissect_7.csv};                \addplot table [x=_step, y=lr, col sep=comma] {Llama_results/llama_160M_appendix_sign_results/gns_64_dissect_8.csv};                \addplot table [x=_step, y=lr, col sep=comma] {Llama_results/llama_160M_appendix_sign_results/gns_64_dissect_9.csv};                \addplot table [x=_step, y=lr, col sep=comma] {Llama_results/llama_160M_appendix_sign_results/gns_64_dissect_10.csv};

        \end{groupplot}
    \end{tikzpicture}
    \vspace{0.1cm} \\
    \caption{Validation perplexity, \emph{exponential moving average} of GNS, batch size and learning rate for 10 seeds for the adaptive strategy starting with an initial batch size of 64 using \signsgd for 160M Llama 3  trained for 3.2B tokens over the C4 dataset. The stability of the validation perplexity across multiple seeds underscores the reliability of our proposed method.}    
    \label{fig:signsgd_diagnosis_plots}
\end{figure}

\begin{table}[ht]
\renewcommand{\arraystretch}{1.3}
\caption{Comparison of adaptive batch sizes using $\lcal_1$ GNS with \signsgd for 160M Llama 3 trained for 3.2B tokens (10 seeds) over the C4 dataset starting at batch size 64 over multiple values of $\theta$. Results show low sensitivity to $\theta$ and the results still performing better than $\lcal_2$ GNS (\cref{tab:GNS_l2_l1_comparison_llama_signsgd}).}
\label{tab:GNS_l1_theta_sensitivity_llama_signsgd}
\begin{center}
\begin{small}
\begin{sc}
\begin{tabular}{ccccc}
\toprule
\multirow{2}{*}{\begin{tabular}[c]{@{}c@{}}Initial \\ Batch Size \end{tabular}} &
\multirow{2}{*}{\begin{tabular}[c]{@{}c@{}}$\theta$\end{tabular}} &
\multirow{2}{*}{\begin{tabular}[c]{@{}c@{}}Validation \\ Loss\end{tabular}} &
\multirow{2}{*}{\begin{tabular}[c]{@{}c@{}}Median \\ Training Steps\end{tabular}} \\ \\
\midrule
\multirow{3}{*}{64} & $0.5$ & $3.6909 \pm 0.0097$& 7650\\ \cmidrule{2-4}
& $0.6$  & $3.6817 \pm 0.0094$ &  9950\\ \cmidrule{2-4}
& $0.7$ & $3.7292 \pm 0.0092$ & 12374\\
\bottomrule
\end{tabular}
\end{sc}
\end{small}
\end{center}
\end{table}


\subsection{Vision Models} \label{app:vision}

This section presents results for the ResNet-18 model trained over the CIFAR-10, comparing the constant batch size baseline at ($B=256$) and ($B=512$) to the proposed adaptive strategy across optimizers, summarized in \cref{tab:cifar10_summary_best_loss}. 
Compared to the baseline ($B=256$), adaptive batching maintains or improves the validation accuracy for five of the six optimizers, with \msgd being the only exception. For the optimizers that reach the baseline threshold, adaptive batching reduces the number of optimization steps by 22.02--72.14\%.
The reduction is consistent for momentum SGD and sign-based methods, and it is largest for \muon, which also attains the best accuracy among the listed optimizers. 

\begin{table}
\begin{center}
\caption{
Validation accuracy and step reduction (\%)  for the ResNet-18 model trained (5
seeds) over the CIFAR-10 dataset. 
The steps reduction (\%) indicates the percentage reduction in steps required to reach the best validation accuracy of the $B = 256$ baseline. The $B = 512$ column is reported for reference. The Adaptive column uses the GNS measured in the dual norm matched to each optimizer's geometry ($\ell_2$ for \sgd/\msgd/, $\ell_1$ for \signsgd/\signum/\adamw, and $\Scal_1$ for \muon).
}
\label{tab:cifar10_summary_best_loss}
\begin{small}
\begin{sc}
\begin{tabular}{lcccc}
\toprule
  \multirow{2}{*}[-0.75ex]{Optimizer} & \multicolumn{3}{c}{validation accuracy} &
\multirow{2}{*}[-0.75ex]{\begin{tabular}[c]{@{}c@{}}Steps \\
Reduction(\%)\end{tabular}} \\ \cmidrule{2-4}
   & B = 256 & B = 512 & Adaptive &  \\
\midrule
\rowcolor{SGDColor}
\msgd & $93.5680 \pm 0.1669$ & $93.4080 \pm 0.2810$ & $93.3060 \pm 0.1270$ & -
\\
\rowcolor{SGDColor}
\sgd & $92.5700 \pm 0.1970$ & $92.0840 \pm 0.1965$ & $93.5600 \pm 0.1378$ & $52.29$ 
\\
\rowcolor{SignSGDColor}
\signsgd & $93.1840 \pm 0.0513$ & $93.2100 \pm 0.2021$ & $93.2320 \pm 0.1979$ & $72.14$
\\
\rowcolor{SignSGDColor}
\signum  & $93.5420 \pm 0.2513$ & $93.7680 \pm 0.0823$ & $93.8160 \pm 0.0940$ & $54.62$
\\
\rowcolor{SignSGDColor}
\adamw & $93.6020 \pm 0.2622$ & $93.9880 \pm 0.1066$ & $93.7580 \pm 0.1612$ & $22.02$
\\
\rowcolor{SpecSGDColor}
\muon & $94.4980 \pm 0.1722$ & $94.6740 \pm 0.2110$ & $94.6420 \pm 0.0750$ & $68.16$ 
\\
\bottomrule
\end{tabular}
\end{sc}
\end{small}
\end{center}
\end{table}

\clearpage
\newpage

\section{Computational Cost of Non-Euclidean GNS Estimation} \label{app:cost_analysis}

This section reports the cost of the $\ell_1$ and $\Scal_1$ GNS estimators. We first present wall-clock measurements for Llama 3 at the 160M and 1B scales, then give a complexity analysis of the additional compute, communication, and memory costs.

\subsection{Wall-Clock Overhead} \label{app:cost_wallclock}

We measured the latency of each training step for the 160M and 1B Llama 3 models on a single $8 \times$H100 (80GB) node using the same DDP configuration as our main experiments.
\cref{tab:cost_wallclock} reports averages over $1{,}000$ optimizer steps with the GNS estimator invoked every $F=100$ steps, as in \cref{tab:hyperparameters_llama}. 
The cost of the additional reductions, the sample variance computation, the matrix square root, and the nuclear-norm evaluation is amortized over $F$ training steps.

\begin{table}[ht]
\caption{Average latency of each training step with and without GNS estimation on a single $8\times$H100 (80GB) node with DDP enabled, averaged over $1{,}000$ optimizer steps with $F=100$.}
\label{tab:cost_wallclock}
\begin{center}
\begin{small}
\begin{sc}
\begin{tabular}{l|c|cc|cc}
\toprule
Configuration & Baseline (s/step) & + $\ell_1$ GNS (s/step) & Overhead & + $\Scal_1$ GNS (s/step) & Overhead \\
\midrule
160M (local BS 16) & $0.447$ & $0.448$ & $+0.2\%$ & $0.509$ & $+13.9\%$ \\
1B (local BS 8)    & $0.622$ & $0.623$ & $+0.3\%$ & $0.869$ & $+39.9\%$ \\
\bottomrule
\end{tabular}
\end{sc}
\end{small}
\end{center}
\end{table}

Computing the $\ell_1$ GNS adds an additional element-wise square operation with one additional \texttt{AllReduce} over the per-element statistics. 
This adds under $1\%$ overhead at both scales.
However, computing the $\Scal_1$ GNS incurs a larger overhead, particularly at the 1B scale, since it requires computing Gram matrices and an eigendecomposition for each parameter tensor.
We leave further memory and performance optimizations of our implementation (FSDP-aware sharding of the Gram matrices, employing efficient iterative matrix-square-root algorithms, e.g., coupled Newton, and GNS computation and communication overlap) to future work.
\subsection{Theoretical Cost Breakdown} \label{app:cost_theoretical}


We summarize the computational, communication, and memory complexities associated with our $\ell_1$ and $\mathcal{S}_1$ GNS estimators in \cref{tab:gns_overhead}. This is based on pseudocode compatible with FSDP provided in \cref{alg:fsdp_l1_gns} and \ref{alg:fsdp_s1_var} for the $\ell_1$ and $\mathcal{S}_1$ GNS estimators, respectively. 
For our analysis, let $P_l = m_l n_l$ (assuming $m_l \le n_l$) denote the parameter count for layer $l$, $P = \sum_l P_l$ denote the total parameter count, and $R$ denote the number of data-parallel ranks.


    

\begin{table*}[t]
\centering
\caption{Summary of additional computational, communication, and peak memory overheads for the proposed non-Euclidean GNS estimators ($\ell_1$ and $\mathcal{S}_1$) under standard Distributed Data Parallel (DDP) and Fully Sharded Data Parallel (FSDP) configurations. Costs are reported per measurement step and all memory buffers are transient. 
}
\label{tab:gns_overhead}
\begin{tabular}{@{} l l p{5.5cm} p{5.5cm} @{}}
\toprule
\textbf{Overhead Type} & \textbf{Distribution} & \textbf{$\ell_1$ GNS Estimator} & \textbf{$\mathcal{S}_1$ GNS Estimator} \\
\midrule
\textbf{Compute} & DDP / FSDP & $\mathcal{O}(P_l)$ element-wise squares & $\mathcal{O}(m_l^2 n_l)$ Gram matrix matmul \\
(Per Layer) & & and variance computation & + $\mathcal{O}(m_l^3)$ eigendecomposition \\
 & & & + $\mathcal{O}(m_l^2 n_l)$ mean-Gram computation \\
\midrule
\textbf{Communication} & DDP & $\mathcal{O}(P)$  \texttt{AllReduce} for & $\mathcal{O}(\sum_l m_l^2)$ \texttt{AllReduce} for\\
(Per Measurement) && squared-gradient statistics & Gram matrix aggregation \\
\addlinespace
& FSDP & $\mathcal{O}(P)$ \texttt{ReduceScatter} for
& $\mathcal{O}(\sum_l m_l^2)$ \texttt{AllReduce} for \\
&& squared-gradient statistics& Gram matrix Aggregation\\
&&+ $\mathcal{O}(1)$ scalar \texttt{AllReduce}& + $\mathcal{O}(P)$ \texttt{AllGather} for \\
&&& full mean gradient reconstruction \\
\midrule
\textbf{Peak Memory} & DDP & $\mathcal{O}(P_l)$ squared-gradient buffer & $\mathcal{O}(m_l^2)$ Gram matrix buffer \\
\addlinespace
(Per Layer) & FSDP & $\mathcal{O}(P_l / R)$ sharded squared & $\mathcal{O}(m_l^2)$ Gram matrix buffer\\
& & mean-gradient buffer & + $\mathcal{O}(P_l)$ reconstructed full\\
& & + $\mathcal{O}(P_l)$ squared-gradient buffer & mean-gradient buffer \\
\bottomrule
\end{tabular}
\end{table*}

\begin{algorithm}[tb]
\caption{$\ell_1$ GNS Estimation with FSDP at step $k$}
\label{alg:fsdp_l1_gns}
\begin{algorithmic}[1]

\REQUIRE $R$ data-parallel ranks; global batch size $B_k$;
         $L$ layers with parameters $x_{k,l} \in \mathbb{R}^{m_l \times n_l}$,
         $P_l = m_l n_l$;

\FOR{$l = L, L{-}1, \ldots, 1$ \textbf{(reverse layer order, backward pass)}}

    \STATE \textbf{// --- FSDP Backward ---}
    \STATE \texttt{AllGather}$(x_{k,l}^{j}) \;\to\; x_{k,l}$
           \hfill\COMMENT{Restore full parameters on rank $j$}
    \STATE Compute local unsharded gradient:
           $\;g_{k,l}^j \;=\; \dfrac{R}{B_k}
             \!\sum_{\xi \in \mathcal{S}_k^j} \nabla_{x_{k,l}}\,\ell(x_{k};\,\xi)$

    \STATE \textbf{// --- Pre-ReduceScatter Hook: Per-Layer Variance Statistics ---}
    \STATE Compute local element-wise square:
           $\;z_{k,l}^j \;\leftarrow\; (g_{k,l}^j)^2$
           \hfill\COMMENT{$O(P_l)$ element-wise ops; rank-local only}

    \STATE \textbf{// --- FSDP Gradient Sharding ---}
    \STATE $\bar{g}_{k,l} \;\leftarrow\;
           \texttt{ReduceScatter}\!\left(\tfrac{1}{R}\textstyle\sum_{j=1}^R g_{k,l}^j\right)$
           \hfill\COMMENT{Standard FSDP; shard size $P_l/R$}
    \STATE $h_{k,l} \;\leftarrow\;
           \texttt{ReduceScatter}\!\left(\tfrac{1}{R}\textstyle\sum_{j=1}^R z_{k,l}^j\right)$
           \hfill\COMMENT{1 additional RS; shard size $P_l/R$}

    \STATE \textbf{// --- Post-ReduceScatter: Compute Variance ---}
    \STATE Compute per-coordinate variance for shard $j$ (\cref{eq:GNS_estimate_SGD_signSGD}):
           \[
             (\hat{\sigma}_{k,l}^j)^2 \;=\;
             \frac{B_k}{R-1}\!\left(h_{k,l}^j \;-\; (\bar{g}_{k,l}^j)^2\right),
           \]
    \STATE Store $\ell_1$ norm for the variance of layer $l$ shard $j$:  $\|\hat{\sigma}_{k,l}^j\|_1$
\ENDFOR

\STATE \textbf{// --- Global Aggregation ---}
\STATE Accumulate layer contributions:
        $\|\hat{\sigma}_{k}^j\|_1 = \textstyle\sum_{l=1}^L  \|\hat{\sigma}_{k,l}^j\|_1$, $\|\bar{g}_{k}^j\|_1 = \textstyle\sum_{l=1}^L  \|\bar{g}_{k,l}^j\|_1$.
\STATE $\|\hat{\sigma}_{k}\|_1 \;\leftarrow\;
           \texttt{AllReduce}\!\left( \textstyle\sum_{j=1}^R \|\hat{\sigma}_{k}^{j}\|_1\right)$ \hfill\COMMENT{All reduce a scalar across shards}
\STATE $\|g_{k}\|_1 \;\leftarrow\;
           \texttt{AllReduce}\!\left( \textstyle\sum_{j=1}^R \|\bar{g}_{k}^j\|_1\right)$ \hfill\COMMENT{All reduce a scalar across shards}
\STATE Compute $\ell_1$ GNS:
$
\hat{\mathcal B}_{\ell_1}
=
\frac{
\|\hat{\sigma}_{k}\|_1^2
}{
\|g_k\|_1^2
}$.
        
\end{algorithmic}
\end{algorithm}

\paragraph{Notation for Distributed Collectives.}
Throughout the algorithms, following \citet{ahn2025dion},
distributed collectives are written as explicit tensor shape
transformations:
\begin{itemize}
    \item $\texttt{AllGather}$: restores a full tensor from parameter shards.
    \item $\texttt{ReduceScatter}$: reduces across data-parallel ranks
    and returns a sharded tensor.
    \item $\texttt{AllReduce}$: reduces across data-parallel ranks
    and broadcasts the result to all ranks.
\end{itemize}

\begin{algorithm}[tb]
\caption{$\mathcal{S}_1$ GNS Estimation with FSDP at step $k$}
\label{alg:fsdp_s1_var}
\begin{algorithmic}[1]

\REQUIRE $R$ data-parallel ranks; global batch size $B_k$;
         $L$ layers with parameters $X_{k,l} \in \mathbb{R}^{m_l \times n_l}$,
         $m_l \le n_l$, $P_l = m_l n_l$

\FOR{$l = L, L{-}1, \ldots, 1$ \textbf{(reverse layer order, backward pass)}}

    \STATE \textbf{// --- FSDP Backward ---}
    \STATE \texttt{AllGather}$(X_{k,l}^{j}) \;\to\; X_{k,l}$
           \hfill\COMMENT{Restore full parameters on rank $j$}
    \STATE Compute local unsharded gradient matrix:
           $\;G_{k,l}^j \;=\; \dfrac{R}{B_k}
             \!\sum_{\xi \in \mathcal{S}_k^j} \nabla_{X_{k,l}}\,\ell(X_{k};\,\xi)$
    \STATE \textbf{// --- Pre-ReduceScatter Hook: Per-Layer Gram Statistics ---}
    \STATE Compute local Gram matrix:
           $\;Q_{k,l}^j \;\leftarrow\; G_{k,l}^j \bigl(G_{k,l}^j\bigr)^\top
             \;\in \mathbb{R}^{m_l \times m_l}$
           \hfill\COMMENT{$O(m_l^2 n_l)$ matmul; rank-local only}
    \STATE Aggregate Gram matrices across ranks:
           $\;A_{k,l} \;\leftarrow\;
             \texttt{AllReduce}\!\left(\textstyle\sum_{j=1}^R Q_{k,l}^j\right)$
           \hfill\COMMENT{$m_l^2$ elements; $m_l^2 \ll P_l$}

    \STATE \textbf{// --- FSDP Gradient Sharding ---}
    \STATE $\bar{G}_{k,l} \;\leftarrow\;
           \texttt{ReduceScatter}\!\left(\tfrac{1}{R}\textstyle\sum_{j=1}^R G_{k,l}^j\right)$
           \hfill\COMMENT{Standard FSDP; shard size $P_l/R$}

    \STATE \textbf{// --- Post-ReduceScatter: Compute Row-Covariance ---}
    \STATE Reconstruct full mean gradient:
           $\;\bar{G}_{k,l} \;\leftarrow\; \texttt{AllGather}(\bar{G}_{k,l}^{j})$
           \hfill\COMMENT{$P_l$ elements; GNS-specific AllGather}
    \STATE Compute mean Gram matrix:
           $\;B_{k,l} \;\leftarrow\; \bar{G}_{k,l} \bar{G}_{k,l}^\top \;\in \mathbb{R}^{m_l \times m_l}$
    \STATE Compute unbiased row-covariance proxy (\cref{eq:spec_sample_variance_estimate}):
           \[
             C_{\mathrm{row},k,l} \;=\;
             \frac{B_k}{R-1}\!\left(\frac{1}{R}\,A_{k,l} \;-\; B_{k,l}\right)
           \]
    \STATE Compute and store layer contributions:
            $\|C_{\mathrm{row},k, l}^{1/2}\|_{\mathcal{S}_1}$,
           $\;\|\bar{G}_{k, l}\|_{\mathcal{S}_1}$ 
           \hfill
           \COMMENT{$O(m_l^3)$ eigendecompositions; local on each rank}
\ENDFOR
\STATE \textbf{// --- Global Aggregation ---}
\STATE Accumulate layer contributions:
        $\|C_{\mathrm{row},k}^{1/2}\|_{\mathcal{S}_1} = \textstyle\sum_{l=1}^L  \|C_{\mathrm{row},k, l}^{1/2}\|_{\mathcal{S}_1}$
        , $\|G_{k}\|_{\mathcal{S}_1} = \textstyle\sum_{l=1}^L  \|\bar{G}_{k, l}\|_{\mathcal{S}_1}$.
\STATE Compute $\mathcal{S}_1$ GNS:
$\hat{\mathcal B}_{\mathcal{S}_1}
=
\frac{\|C_{\mathrm{row},k}^{1/2}\|_{\mathcal{S}_1}^2}{\|G_{k}\|_{\mathcal{S}_1}^2}$.
\end{algorithmic}
\end{algorithm}

\end{document}